\newtheorem{theorem}{Theorem}[section]
\newtheorem{lemma}[theorem]{Lemma}
\newtheorem{proposition}[theorem]{Proposition}
\newtheorem{corollary}[theorem]{Corollary}
\newtheorem{definition}[theorem]{Definition}
\newcommand{\R}[0] {{ \mathbb{R} }}
\DeclareMathOperator*{\argmax}{argmax}
\newcommand{\Cl}[0] {{ \mathcal{C} }}
\newcommand{\G}[0] {{ \mathcal{G} }}
\newcommand{\A}[0] {{ \mathcal{A} }}
\newcommand{\Pa}[0] {{ \mathcal{P} }}
\newcommand{\W}[0] {{ \mathcal{W} }}
\newcommand{\T}[0] {{ \mathcal{T} }}
\newcommand{\drelax}[0] {{ \,\,\rightarrow_{d}\,\, }}
\newcommand{\lcmatch}[0] {{ \,\,\star_{LC}\,\, }}
\newcommand{\cnnconv}[0] {{ \,\,\star_{\textrm{DCN}}\,\, }}
\DeclareMathOperator*{\MaxPool}{MaxPool}
\newcommand{\iver}[1]{{\mathbb{[} #1 \mathbb{]}}}
\newcommand{\Mask}{{\mathcal{M}}}
\DeclareMathOperator{\amax}{argmax\,}
\DeclareMathOperator{\diag}{diag}
\DeclareMathOperator{\const}{const}
\DeclareMathOperator{\relu}{ReLu}
\DeclareMathOperator{\maxpool}{MaxPool}
\DeclareMathOperator{\sign}{sgn}
\newcommand{\qq}{\vspace*{-2mm}}
\newcommand{\qqq}{\vspace*{-3mm}}
\title{A Probabilistic Framework for Deep Learning}
\author{
  Ankit B.\ Patel \\
  Baylor College of Medicine, Rice University\\
  \texttt{ankitp@bcm.edu,abp4@rice.edu} 
\\
   \And
   Tan Nguyen \\
   Rice University \\
   \texttt{mn15@rice.edu} 
\\
   \And
   Richard G.\ Baraniuk \\
   Rice University \\
   \texttt{richb@rice.edu} 
\\
}
\begin{document}

\maketitle

\begin{abstract}
\qqq
We develop a probabilistic framework for deep learning based on the \textit{Deep Rendering Mixture Model} (DRMM), a new {\em generative probabilistic model} that explicitly capture variations in data due to latent task nuisance variables. 
We demonstrate that max-sum inference in the DRMM yields an algorithm that exactly reproduces the operations in deep convolutional neural networks (DCNs), providing a first principles derivation.
Our framework provides new insights into the successes and shortcomings of DCNs as well as a principled route to their improvement. 
DRMM training via the Expectation-Maximization (EM) algorithm is a powerful alternative to DCN back-propagation, and initial training results are promising.
Classification based on the DRMM and other variants outperforms DCNs in supervised digit classification, training 2-3$\times$ faster while achieving similar accuracy.
Moreover, the DRMM is applicable to semi-supervised and unsupervised learning tasks, achieving results that are state-of-the-art in several categories on the MNIST benchmark and comparable to state of the art on the CIFAR10 benchmark. 
\qqq
\end{abstract}

\section{Introduction}
\qq

Humans are adept at a wide array of complicated sensory inference tasks, from recognizing objects in an image to understanding phonemes in a speech signal, despite significant variations such as the position, orientation, and scale of objects and the pronunciation, pitch, and volume of speech.
Indeed, the main challenge in many sensory perception tasks in vision, speech, and natural language processing is a high amount of such {\em nuisance variation}. 
Nuisance variations complicate perception by turning otherwise simple statistical inference problems with a small number of variables (e.g., class label) into much higher-dimensional problems.
The key challenge in developing an inference algorithm is then \textit{how to factor out all of the nuisance variation in the input}. 
Over the past few decades, a vast literature that approaches this problem from myriad different perspectives has developed, but the most difficult inference problems have remained out of reach.

Recently, a new breed of machine learning algorithms have emerged for high-nuisance inference tasks, achieving super-human performance in many cases. 
A prime example of such an architecture is the {\em deep convolutional neural network} (DCN), which has seen great success in tasks like visual object recognition and localization, speech recognition and part-of-speech recognition.

The success of deep learning systems is impressive, but a fundamental question remains: {\em Why do they work?} 
Intuitions abound to explain their success. Some explanations focus on properties of feature invariance and selectivity developed over multiple layers, while others credit raw computational power and the amount of available training data. 
However, beyond these intuitions, a coherent theoretical framework for understanding, analyzing, and synthesizing deep learning architectures has remained elusive. 

In this paper, we develop a new theoretical framework that provides insights into both the successes and shortcomings of deep learning systems, as well as a principled route to their design and improvement.
Our framework is based on a {\em generative probabilistic model that explicitly captures variation due to latent nuisance variables}.  
The {\em Rendering Mixture Model} (RMM) explicitly models nuisance variation through a {\em rendering function} that combines task target variables (e.g., object class in an object recognition) with a collection of task nuisance variables (e.g., pose).  
The {\em Deep Rendering Mixture Model} (DRMM) extends the RMM in a hierarchical fashion by rendering via a product of affine nuisance transformations across multiple levels of abstraction.  
The graphical structures of the RMM and DRMM enable efficient inference via message passing (e.g., using the max-sum/product algorithm) and training via the expectation-maximization (EM) algorithm.
A key element of our framework is the relaxation of the RMM/DRMM generative model to a discriminative one in order to optimize the bias-variance tradeoff.
Below, we demonstrate that the computations involved in joint MAP inference in the relaxed DRMM coincide exactly with those in a DCN.

The intimate connection between the DRMM and DCNs provides a range of new insights into how and why they work and do not work. 
While our theory and methods apply to a wide range of different inference tasks (including, for example, classification, estimation, regression, etc.) 
that feature a number of task-irrelevant nuisance variables
(including, for example, object and speech recognition), for concreteness of exposition, we will focus below on the classification problem underlying visual object recognition. The proofs of several results appear in the Appendix.

\qq
\section{Related Work} \label{sec:related-work}
\qq

{\bf Theories of Deep Learning.} Our theoretical work shares similar goals with several others such as the {\it i}-Theory \cite{anselmi2013magic} (one of the early inspirations for this work), Nuisance Management \cite{soatto2016visual}, the Scattering Transform \cite{bruna2013invariant}, and the simple sparse network proposed by Arora et al.\ \cite{arora2013provable}.

{\bf Hierarchical Generative Models.} The DRMM is closely related to several hierarchical models, including the Deep Mixture of Factor Analyzers \cite{tang2012deep} and the Deep Gaussian Mixture Model \cite{van2014factoring}.

Like the above models, the DRMM attempts to employ parameter sharing, capture the notion of nuisance transformations explicitly, learn selectivity/invariance, and promote sparsity. However, the key features that distinguish the DRMM approach from others are:
(i) The DRMM explicitly models nuisance variation across multiple levels of abstraction via a product of affine transformations. This factorized linear structure serves dual purposes: it enables (ii) tractable inference (via the max-sum/product algorithm), and (iii) it serves as a regularizer to prevent overfitting by an exponential reduction in the number of parameters. Critically, (iv)  inference is not performed for a single variable of interest but instead for the full global configuration of nuisance variables. This is justified in low-noise settings. And most importantly, (v) we can derive the structure of DCNs \textit{precisely}, endowing DCN operations such as the convolution, rectified linear unit, and spatial max-pooling with principled probabilistic interpretations.  Independently from our work, Soatto et al.\ \cite{soatto2016visual} also focus strongly on nuisance management as the key challenge in defining good scene representations. However, their work considers max-pooling and ReLU as \textit{approximations} to a marginalized likelihood, whereas our work interprets those operations differently, in terms of max-sum inference in a specific probabilistic generative model. The work on the number of linear regions in DCNs \cite{montufar2014number} is complementary to our own, in that it sheds light on the complexity of functions that a DCN can compute. Both approaches could be combined to answer questions such as: How many templates are required for accurate discrimination? How many samples are needed for learning? We plan to pursue these questions in future work.

{\bf Semi-Supervised Neural Networks.} Recent work in neural networks designed for semi-supervised learning (few labeled data, lots of unlabeled data) has seen the resurgence of generative-like approaches, such as Ladder Networks \cite{rasmus2015semi}, Stacked What-Where Autoencoders (SWWAE) \cite{zhao2015swwae} and many others. These network architectures augment the usual task loss with one or more regularization term, typically including an image reconstruction error, and train jointly. A key difference with our DRMM-based approach is that these networks do not arise from a proper probabilistic density and as such they must resort to learning the bottom-up recognition and top-down reconstruction weights separately, and they cannot keep track of uncertainty.

\qq
\section{The Deep Rendering Mixture Model: Capturing Nuisance Variation}
\label{sec:drm}
\qq

Although we focus on the DRMM in this paper, we define and explore several other interesting variants,including the Deep Rendering Factor Model (DRFM) and the Evolutionary DRMM (E-DRMM), both of which are discussed in more detail in \cite{patel2015-arxiv} and the Appendix. The E-DRMM is particularly important, since its max-sum inference algorithm yields a decision tree of the type employed in a random decision forest classifier\cite{breiman2001random}.


\qq
\subsection{The (Shallow) Rendering Mixture Model}
\label{sec:rm}
\qq



The RMM is a {\em generative probabilistic model} for images that explicitly models the relationship between images $I$ of the same object $c$ subject to nuisance $g \in \G$, where $\G$ is the set of all nuisances (see Fig.~1A for the graphical model depiction).
\begin{gather} 
	c \sim{\rm Cat}(\{\pi_c\}_{c\in\Cl}), \quad g \sim{\rm Cat}(\{\pi_g\}_{g\in\G}), \quad a \sim{\rm Bern}(\{\pi_a\}_{a\in\A}), \nonumber \\
	 I = a\mu_{cg} +\,\, {\rm noise}.
\label{eqn:RM}
\end{gather}
Here, $\mu_{cg}$ is a template that is a function of the class $c$ and the nuisance $g$. The switching variable $a \in \A=\{\textrm{ON, OFF}\}$ determines whether or not to render the template at a particular patch; a sparsity prior on $a$ thus encourages each patch to have a few causes.
%
%
%
The noise distribution is from the exponential family, but without loss of generality we illustrate below using Gaussian noise $\mathcal{N}(0,\sigma^{2}\textbf{1})$.
We assume that the noise is i.i.d. as a function of pixel location $x$ and that the class and nuisance variables are independently distributed according to categorical distributions. 
(Independence is merely a convenience for the development; in practice, $g$ can depend on $c$.)
Finally, since the world is spatially varying and an image can contain a number of different objects, it is natural to break the image up into a number of \emph{patches}, that are centered on a single pixel $x$. The RMM described in (\ref{eqn:RM}) then applies at the patch level, where $c$, $g$, and $a$  depend on pixel/patch location $x$. We will omit the dependence on $x$ when it is clear from context.

\begin{figure}[t]
    \centering
    \includegraphics[width=0.8\textwidth]{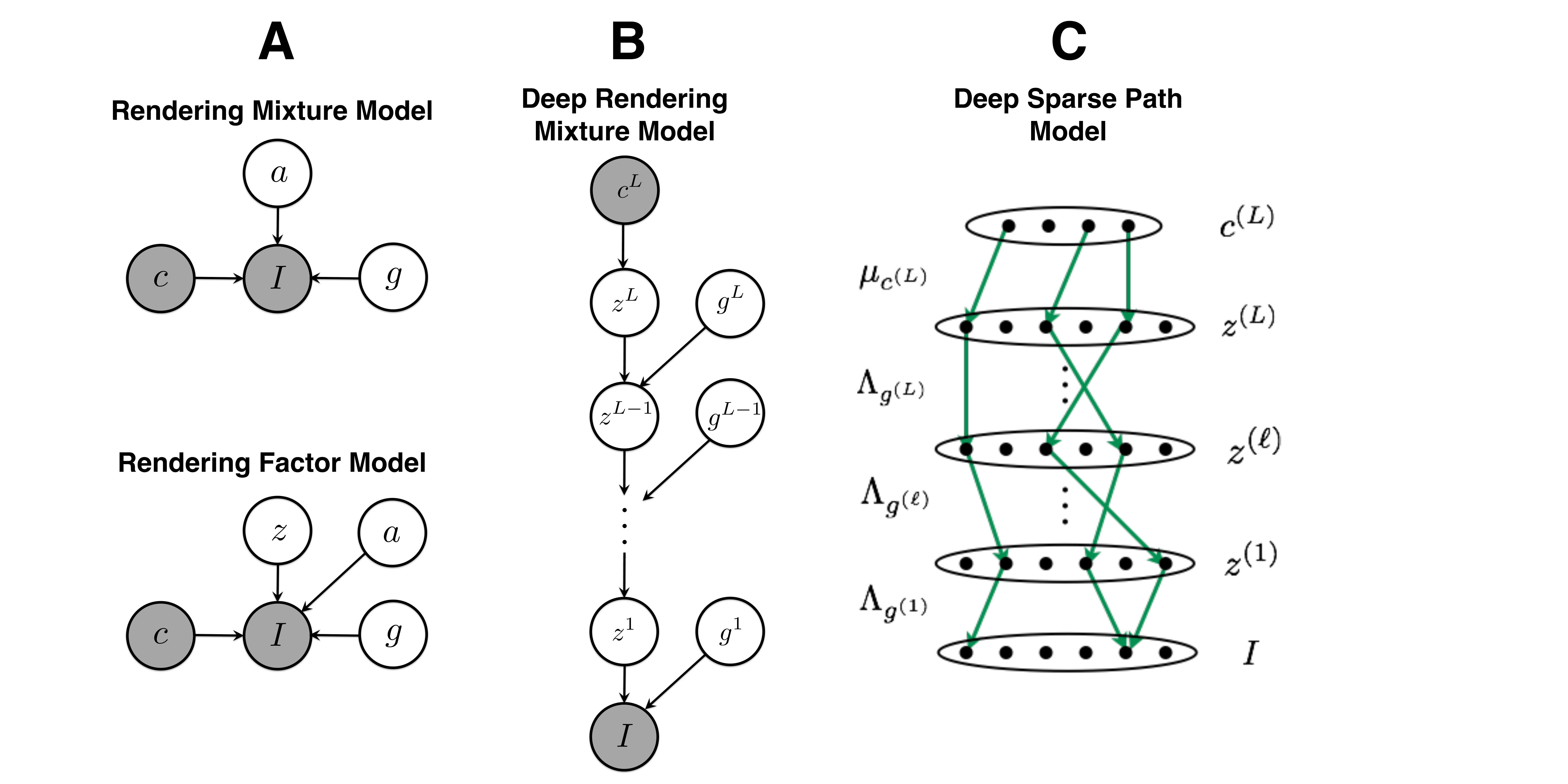} 
    \caption{Graphical model depiction of (A) the Shallow Rendering Models and (B) the DRMM. All dependence on pixel location $x$ has been suppressed for clarity. (C) The Sparse Sum-over-Paths formulation of the DRMM. A rendering path contributes only if it is active (green arrows).}
    \vspace*{-3mm}
    \label{fig:bigFig}
\end{figure}

{\bf Inference in the Shallow RMM Yields One Layer of a DCN.} \label{sec:infRM}
We now connect the RMM with the computations in one layer of a deep convolutional network (DCN). 
To perform object recognition with the RMM, we must marginalize out the nuisance variables $g$ and $a$. 
Maximizing the log-posterior over $g \in \G$ and $a \in \A$ and then choosing the most likely class
yields the {\em max-sum classifier}
\begin{align} 
	\label{eqn:msc}
	\hat{c}(I) &= \argmax_{c \in \Cl}  \max_{g \in \G} \max_{a \in \A} \; \ln p(I|c,g,a) + \ln p(c,g,a)
\end{align}
that computes the most likely global configuration of target and nuisance variables for the image. 
Assuming that Gaussian noise is added to the template, the image is normalized so that $\| I \|^{2}=1$, and $c, g$ are uniformly distributed, (\ref{eqn:msc}) becomes
\begin{align} \label{eqn:cnn1}
\hat{c}(I) &\equiv \argmax_{c \in \Cl} \max_{g\in \G} \max_{a\in \A} a(\langle w_{cg} | I\rangle + b_{cg}) + b_{a}\\
&= \argmax_{c \in \Cl} \max_{g\in \G} \relu(\langle w_{cg} | I\rangle + b_{cg}) + b_{0}
\end{align}
where $\textrm{ReLU}(u) \equiv (u)_+ = \max\{u,0\}$ is the soft-thresholding operation performed by the rectified linear units in modern DCNs.
Here we have reparameterized the RMM model from the \emph{moment parameters} 
$\theta \equiv  \{\sigma^2,\mu_{cg}, \pi_{a}\}$ to the \emph{natural parameters}
$\eta(\theta) \equiv \{w_{cg} \equiv \frac{1}{\sigma^2} \mu_{cg}, \: b_{cg} \equiv -\frac{1}{2\sigma^2} \|  \mu_{cg}\|_2^2,\, b_a \equiv \ln p(a) = \ln \pi_a, \, b_{0}\equiv\ln \left( \frac{p(a=1)}{p(a=0)}\right)$. The relationships $\eta(\theta)$ are referred to as the \textit{generative parameter constraints}.

We now demonstrate that {\em the sequence of operations in the max-sum classifier in (\ref{eqn:cnn1}) coincides exactly with the operations involved in one layer of a DCN}: image normalization, linear template matching, thresholding, and max pooling. 
First, the image is {\em normalized} (by assumption). 
%
Second, the image is filtered with a set of noise-scaled rendered templates $w_{cg}$. 
If we assume {\it translational invariance} in the RMM, then the rendered templates $w_{cg}$ yield a {\em convolutional} layer in a DCN \cite{lecun1998gradient} (see Appendix Lemma~\ref{lem:trans-to-dcn-conv}). 
Third, the resulting activations (log-probabilities of the hypotheses) are passed through a pooling layer; if $g$ is a translational nuisance, then taking the maximum over $g$ corresponds to {\it max pooling} in a DCN. Fourth, since the switching variables are latent (unobserved), we max-marginalize over them during classification. This leads to the ReLU operation (see Appendix Proposition~\ref{prop:detailedReLuProof}).

\qq
\subsection{The Deep Rendering Mixture Model: Capturing Levels of Abstraction}
\label{sec:drma}
\qq

\label{subsubsec:drm}


Marginalizing over the nuisance $g \in \G$ in the RMM is intractable for modern datasets, since $\G$ will contain all configurations of the high-dimensional nuisance variables $g$. In response, we extend the RMM into a hierarchical {\em Deep Rendering Mixture Model} (DRMM) by factorizing $g$ into a number of different nuisance variables $g^{(1)}, g^{(2)}, \dots, g^{(L)}$ at different levels of abstraction.  
The DRMM image generation process starts at the highest level of abstraction ($\ell=L$), with the random choice of the object class $c^{(L)}$ and overall nuisance $g^{(L)}$. 
It is then followed by random choices of the lower-level details $g^{(\ell)}$ (we absorb the switching variable $a$ into $g$ for brevity), progressively rendering more concrete information level-by-level ($\ell \rightarrow \ell-1$), until the process finally culminates in a fully rendered $D$-dimensional image $I$ ($\ell=0$). Generation in the DRMM takes the form:
\begin{align}
    c^{(L)} &\sim \textrm{Cat}(\{\pi_{c^{(L)}}\}),\,\, g^{(\ell)} \sim \textrm{Cat}(\{\pi_{g^{(\ell)}}\}) \,\, \forall \ell\in [L] \\
    \mu_{c^{(L)}g} &\equiv \Lambda_{g}\mu_{c^{(L)}} \equiv \Lambda^{(1)}_{g^{(1)}}\Lambda^{(2)}_{g^{(2)}} \cdots \Lambda^{(L-1)}_{g^{(L-1)}}\Lambda^{(L)}_{g^{(L)}} \mu_{c^{(L)}} \\
    I &\sim \mathcal{N}(\mu_{c^{(L)}g},\Psi \equiv \sigma^2 \mathbf{1}_D),
    \label{eqn:DRM}
\end{align}
where the latent variables, parameters, and helper variables are defined in full detail in Appendix~\ref{supp:drmm-to-dcn}. 

The DRMM is a deep Gaussian Mixture Model (GMM) with special constraints on the latent variables. 
Here, $c^{(L)} \in \Cl^{L}$ and  $g^{(\ell)} \in \G^{\ell}$, where $\Cl^{L}$ is the set of target-relevant nuisance variables, and $\G^{\ell}$ is the set of all target-irrelevant nuisance variables at level $\ell$. The \emph{rendering path} is defined as the sequence $(c^{(L)},g^{(L)},\ldots,g^{(\ell)},\ldots,g^{(1)})$ from the root (overall class) down to the individual pixels at $\ell=0$.  $\mu_{c^{(L)}g}$ is the template used to render the image, and $\Lambda_{g} \equiv \prod_{\ell} \Lambda_{g^{(\ell)}}$ represents the sequence of local nuisance transformations that partially render finer-scale details as we move from abstract to concrete. Note that each $\Lambda^{(\ell)}_{g^{(\ell)}}$ is an \textit{affine} transformation with a bias term $\alpha^{(\ell)}_{g^{(\ell)}}$ that we have suppressed for clarity. Fig.~\ref{fig:bigFig}B illustrates the corresponding graphical model. As before, we have suppressed the dependence of $g^{(\ell)}$ on the pixel location $x^{(\ell)}$ at level $\ell$ of the hierarchy.

\noindent\textbf{Sum-Over-Paths Formulation of the DRMM.} We can rewrite the DRMM generation process by expanding out the matrix multiplications into scalar products. This yields an interesting new perspective on the DRMM, as each pixel intensity $I_x = \sum_{p} \lambda_p^{(L)} a_p^{(L)} \cdots \lambda_p^{(1)} a_p^{(1)}$ is the sum over all \textit{active paths} to that pixel, of the product of weights along that path. A rendering path $p$ is active iff every switch on the path is active i.e. $\prod_{\ell} a^{(\ell)}_p = 1$ . While exponentially many possible rendering paths exist, only a very small fraction, controlled by the sparsity of $a$, are active. Fig.~\ref{fig:bigFig}C depicts the sum-over-paths formulation graphically.

\textbf{Recursive and Nonnegative Forms.} We can rewrite the DRMM into a recursive form as $z^{(\ell)} = \Lambda^{(\ell + 1)}_{g^{(\ell + 1)}} z^{(\ell+1)}$, where $z^{(L)} \equiv \mu_{c^{(L)}}$ and $z^{(0)} \equiv I$. We refer to the helper latent variables $z^{(\ell)}$ as \textit{intermediate rendered templates}. We also define the \textit{Nonnegative DRMM} (NN-DRMM) as a DRMM with an extra nonnegativity constraint on the intermediate rendered templates, $z^{(\ell)} \geq 0 \forall \ell \in [L]$. The latter is enforced in training via the use of a ReLu operation in the top-down reconstruction phase of inference. Throughout the rest of the paper, we will focus on the NN-DRMM, leaving the unconstrained DRMM for future work. For brevity, we will drop the NN prefix.

\textbf{Factor Model.} We also define and explore a variant of the DRMM that where the top-level latent variable is Gaussian: $z^{(L+1)} \sim \mathcal{N}(0,\mathbf{1}_{d}) \in \R^{d}$ and the recursive generation process is otherwise identical to the DRMM: $z^{(\ell)} = \Lambda^{(\ell + 1)}_{g^{(\ell+1)}} z^{(\ell+1)}$ where $g^{(L+1)} \equiv c^{(L)}$. We call this the \textit{Deep Rendering Factor Model} (DRFM). The DRFM is closely related to the Spike-and-Slab Sparse Coding model \cite{sheikh2014truncated}. Below we explore some training results, but we leave most of the exploration for future work. (see Fig.~\ref{fig:RFM_NN} in Appendix~\ref{sec:rfm-nn} for architecture of the RFM, the shallow version of the DRFM)

\textbf{Number of Free Parameters.} Compared to the shallow RMM, which has $D \, |\Cl^{L} |\prod_{\ell}|\G^{\ell}|$ parameters, the DRMM has only $\sum_{\ell}|\G^{\ell+1}|D^{\ell}D^{\ell+1}$ parameters, an {\it exponential reduction in the number of free parameters}  (Here $\G^{L+1}\equiv \Cl^{L}$ and $D^{\ell}$ is the number of units in the $\ell$-th layer with $D^{0} \equiv D$). This enables efficient inference, learning, and better generalization. Note that we have assumed dense (fully connected) $\Lambda_g$'s here; if we impose more structure (e.g. translation invariance), the number of parameters will be further reduced.

{\bf Bottom-Up Inference.} \label{sec:infDRM}
As in the shallow RMM, given an input image $I$ the DRMM classifier infers the most likely global configuration $\{ c^{(L)}$, $g^{(\ell)} \}$, $\ell=0,1,\dots,L$ by executing the max-sum/product message passing algorithm in two stages: (i) bottom-up (from fine-to-coarse) to infer the overall class label $\hat{c}^{(L)}$ and (ii)~top-down (from coarse-to-fine) to infer the latent variables $\hat{g}^{(\ell)} $ at all intermediate levels $\ell$. 
First, we will focus on the fine-to-coarse pass since it leads directly to DCNs. 

Using (\ref{eqn:cnn1}), the \emph{fine-to-coarse} NN-DRMM inference algorithm for inferring the most likely cateogry $\hat{c}^L$ is given by
\begin{align}
\argmax_{c^{(L)} \in \Cl} \max_{g\in \G} & \: \mu_{c^{(L)}g}^T I \nonumber
	=\argmax_{c^{(L)} \in \Cl} \: \max_{g\in \G} \mu_{c^{(L)}}^T \prod_{\ell=L}^{1} \Lambda_{g^{(\ell)}}^{T} I \nonumber\\
	                        = &\argmax_{c^{(L)} \in \Cl} \: \mu_{c^{(L)}}^T \max_{g^{(L)}\in \G^{L}} \Lambda_{g^{(L)}}^{T} \cdots 
	                            \underbrace{ \max_{g^{(1)}\in \G^{1}} \Lambda_{g^{(1)}}^{T} | I  }_{\equiv \, I^{1}}
~ = ~\cdots~ \equiv \argmax_{c^{(L)} \in \Cl} \: \mu_{c^{(L)}}^T I^{(L)}. 
    \label{eqn:f2c}
\end{align}
Here, we have assumed the bias terms $\alpha_{g^{(\ell)}}=0$. In the second line, we used the max-product algorithm (distributivity of max over products i.e. for $a > 0$, $\max \{ab,ac\} = a \max \{b,c\}$). See Appendix~\ref{supp:drmm-to-dcn} for full details.
This enables us to rewrite (\ref{eqn:f2c}) recursively:
\begin{align} 
	I^{(\ell+1)} \equiv \max_{g^{(\ell+1)}\in \G^{\ell+1}} \underbrace{(\Lambda_{g^{(\ell+1)}})^{T}}_{\equiv W^{(\ell+1)}} I^{(\ell)} = {\rm MaxPool} ({\rm ReLu} ({\rm Conv}(I^{(\ell)}))),
	\label{eqn:req}
\end{align}
where $I^{(\ell)}$ is the output {\em feature maps} of layer $\ell$, $I^{(0)} \equiv I$ and $W^{(\ell)}$ are the filters/weights for layer $\ell$. 
Comparing to (\ref{eqn:cnn1}), we see that the $\ell$-th iteration of (\ref{eqn:f2c}) and (\ref{eqn:req}) corresponds to feedforward propagation in the $\ell$-th layer of a DCN. \textit{Thus a DCN's operation has a probabilistic interpretation as fine-to-coarse inference of the most probable configuration in the DRMM.}

\textbf{Top-Down Inference.} A unique contribution of our generative model-based approach is that we have a principled derivation of a top-down inference algorithm for the NN-DRMM (Appendix~\ref{supp:drmm-to-dcn}). The resulting algorithm amounts to a simple top-down reconstruction term $\hat{I}_n = \Lambda_{\hat{g}_n}\mu_{\hat{c}^{(L)}_n}$.

%

{\bf Discriminative Relaxations: From Generative to Discriminative Classifiers.} \label{sec:gen-to-discr}
We have constructed a correspondence between the DRMM and DCNs, but the mapping is not yet complete. In particular, recall the generative constraints on the weights and biases. 
DCNs do not have such constraints --- their weights and biases are free parameters. 
As a result, when faced with training data that violates the DRMM's underlying assumptions, the DCN will have more freedom to compensate. 
In order to complete our mapping from the DRMM to DCNs, we \textit{relax} these parameter constraints, allowing the weights and biases to be free and independent parameters. 
We refer to this process as a \emph{discriminative relaxation of a generative classifier} (\cite{jordan2002discriminative, bishop2007generative}, see the Appendix~\ref{sec:discr-relax} for details).


\qq
\subsection{Learning the Deep Rendering Model via the Expectation-Maximization (EM) Algorithm}
\label{sec:EM-DRM}
\qq
We describe how to learn the DRMM parameters from training data via the hard EM algorithm in Algorithm~\ref{HardEM-RM}. 

\begin{algorithm}[t]
\caption{Hard EM and EG Algorithms for the DRMM}
\label{HardEM-RM}
 \begin{align}
   \textbf{E-step:} \quad \quad
    &\hat{c}_n, \hat{g}_{n} = \argmax_{c,g} \: \gamma_{ncg} \nonumber\\
   \textbf{M-step:} \quad \quad
       &\hat{\Lambda}_{g^{(\ell)}} = \underbrace{{\rm GLS}} \left(I_{n}^{(\ell-1)} \sim \hat{z}_{n}^{(\ell)} \:|\: g^{(\ell)}=\hat{g}_n^{(\ell)} \right) \: \forall g^{(\ell)} \nonumber \\
   \textbf{G-step:} \quad \quad
       &\Delta \hat{\Lambda}_{g^{(\ell)}} \propto \nabla_{\Lambda_{g^{(\ell)}}} \ell_{DRMM}(\theta) \nonumber
 \end{align}
\end{algorithm}

The DRMM E-Step consists of bottom-up and top-down (reconstruction) E-steps at each layer $\ell$ in the model. The $\gamma_{ncg} \equiv p(c, g | I_n; \theta)$ are the responsibilities, where for brevity we have absorbed $a$ into $g$. The DRMM M-step consists of M-steps for each layer $\ell$ in the model. The per-layer M-step in turn consists of a responsibility-weighted regression, where ${\rm GLS}(y_{n} \sim x_{n})$ denotes the solution to a generalized Least Squares regression problem that predict targets $y_{n}$ from predictors $x_{n}$ and is closely related to the SVD. The Iversen bracket is defined as $\llbracket b \rrbracket \equiv 1$ if expression $b$ is true and is $0$ otherwise. 

There are several interesting and useful features of the EM algorithm. First, we note that it is a \emph{derivative-free alternative to the back propagation algorithm} for training that is both intuitive and potentially much faster (provided a good implementation for the GLS problem). Second, it is easily parallelized over layers, since the M-step updates each layer separately (model parallelism).
Moreover, it can be extended to a batch version so that at each iteration the model is simultaneously updated using separate subsets of the data (data parallelism). This will enable training to be distributed easily across multiple machines. In this vein, our EM algorithm shares several features with the ADMM-based Bregman iteration algorithm in \cite{tomg-admm}. However, the motivation there is from an optimization perspective and so the resulting training algorithm is not derived from a proper probabilistic density. Third, it is far more interpretable via its connections to (deep) sparse coding and to the hard EM algorithm for GMMs. The sum-over-paths formulation makes it particularly clear that the mixture components are paths (from root to pixels) in the DRMM.

\paragraph{G-step.} For the training results in this paper, we use the Generalized EM algorithm wherein we replace the M-step with a gradient descent based G-step (see Algorithm~\ref{HardEM-RM}). This is useful for comparison with backpropagation-based training and for ease of implementation. But before we use the G-step, we would like to make a few remarks about the proper M-step of the algorithm, saving the implementation for future work.

{\bf Flexibility and Extensibility.} Since we can choose different priors/types for the nuisances $g$, the larger DRMM family could be useful for modeling a wider range of inputs, including scenes, speech and text. 
The EM algorithm can then be used to train the whole system end-to-end on different sources/modalities of labeled and unlabeled data. 
Moreover, the capability to sample from the model allows us to probe what is captured by the DRMM, providing us with principled ways to improve the model.
And finally, in order to properly account for noise/uncertainty, it is possible in principle to extend this algorithm into a \textit{soft} EM algorithm. We leave these interesting extensions for future work.


\qq
\subsection{New Insights into Deep Convnets}
\label{sec:insights}
\qq


{\bf DCNs are Message Passing Networks.} 
The DRMM inference algorithm is equivalent to performing \emph{max-sum-product message passing of the DRMM} Note that by ``max-sum-product'' we mean a novel combination of max-sum and max-product as described in more detail in the proofs in the Appendix. 
The factor graph encodes the same information as the generative model but organizes it in a manner that simplifies the definition and execution of inference algorithms \cite{kschischang2001factor}. Such inference algorithms are called {\em message passing} algorithms, because they work by passing real-valued functions called messages along the edges between nodes. 
In the DRMM, the messages sent from finer to coarser levels are in fact the feature maps $I^{(\ell)}$. The factor graph formulation provides a powerful interpretation: the \emph{convolution, Max-Pooling and ReLu operations in a DCN correspond to max-sum/product inference in a DRMM}. Thus, we see that architectures and layer types commonly used in today's DCNs can be derived from precise probabilistic assumptions that entirely determine their structure. 
The DRMM therefore unifies two perspectives --- neural network and probabilistic inference (see Table~\ref{tab:TwoPoVs} in the Appendix for details).

\textbf{Shortcomings of DCNs.} DCNs perform poorly in categorizing transparent objects \cite{russakovsky2015imagenet}. 
This might be explained by the fact that transparent objects generate pixels that have multiple sources, conflicting with the DRMM sparsity prior on $a$, which encourages few sources.  
DCNs also fail to classify slender and man-made objects \cite{russakovsky2015imagenet}. 
This is because of the locality imposed by the locally-connected/convolutional layers, or equivalently, the small size of the template $\mu_{c^{(L)}g}$ in the DRMM. As a result, DCNs fail to model long-range correlations. 

{\bf Class Appearance Models and Activity Maximization.} The DRMM enables us to understand how trained DCNs distill and store knowledge from past experiences in their parameters. Specifically, the DRMM generates rendered templates $\mu_{c^{(L)}g}$ via a mixture of products of affine transformations, thus implying that \emph{class appearance models in DCNs are stored in a similar factorized-mixture form over multiple levels of abstraction.} 
As a result, it is the product of all the filters/weights over all layers that yield meaningful images of objects (Eq.~\ref{eqn:DRM}). We can also shed new light on another approach to understanding DCN memories that proceeds by searching for input images that maximize the activity of a particular class unit (say, class of cats) \cite{simonyan2013deep}, a technique we call \emph{activity maximization}. Results from activity maximization on a high performance DCN trained on 15 million images is shown in Fig.~1 of \cite{simonyan2013deep}. The resulting images reveal much about how DCNs store memories. Using the DRMM, the solution $I^{*}_{c^{(L)}}$ of the activity maximization for class $c^{(L)}$ can be derived as the sum of individual activity-maximizing patches $I^{*}_{\Pa_{i}}$, each of which is a function of the learned DRMM parameters (see Appendix~\ref{supp:actmax-proof}): 
\begin{align} 
	I^{*}_{c^{(L)}} \equiv \sum_{\Pa_{i} \in \Pa} I^{*}_{\Pa_{i}}(c^{(L)},g^{*}_{\Pa_{i}}) \propto \sum_{\Pa_{i} \in \Pa} \mu(c^{(L)},g^{*}_{\Pa_{i}}).  \label{eqn:actmax}		
\end{align}
This implies that $I^{*}_{c^{(L)}}$ contains multiple appearances of the same object but in various poses. Each activity-maximizing patch has its own pose $g^{*}_{\Pa_{i}}$, consistent with Fig.~1 of \cite{simonyan2013deep} and our own extensive experiments with AlexNet, VGGNet, and GoogLeNet (data not shown). Such images provide strong confirmational evidence that the underlying model is a mixture over nuisance parameters, as predcted by the DRMM.

{\bf Unsupervised Learning of Latent Task Nuisances.} A key goal of representation learning is to disentangle the factors of variation that contribute to an image's appearance. Given our formulation of the DRMM, it is clear that DCNs are discriminative classifiers that capture these factors of variation with latent nuisance variables $g$. As such, the theory presented here makes a clear prediction that \emph{for a DCN, supervised learning of task targets will lead to unsupervised learning of latent task nuisance variables}. From the perspective of manifold learning, this means that the architecture of DCNs is designed to learn and disentangle the intrinsic dimensions of the data manifolds.

In order to test this prediction, we trained a DCN to classify synthetically rendered images of naturalistic objects, such as cars and cats, with variation in factors such as location, pose, and lighting. After training, we probed the layers of the trained DCN to quantify how much linearly decodable information exists about the task target $c^{(L)}$ and latent nuisance variables $g$.  Fig. 2 (Left) shows that the trained DCN possesses significant information about latent factors of variation and, furthermore, the more nuisance variables, the more layers are required to disentangle the factors. This is strong evidence that depth is necessary and that the amount of depth required increases with the complexity of the class models and the nuisance variations.

\vspace{-2mm}
\section{Experimental Results}
\vspace{-2mm}

We evaluate the DRMM and DRFM's performance on the MNIST dataset, a standard digit classification benchmark with a training set of 60,000 $28\times 28$ labeled images and a test set of 10,000 labeled images. We also evaluate the DRMM's performance on CIFAR10, a dataset of natural objects which include a training set of 50,000 $32\times 32$ labeled images and a test set of 10,000 labeled images. In all experiments, we use a full E-step that has a bottom-up phase and a principled top-down reconstruction phase. In order to approximate the class posterior in the DRMM, we include a Kullback-Leibler divergence term between the inferred posterior $p(c|I)$ and the true prior $p(c)$ as a regularizer \cite{kingma2013auto}. We also replace the M-step in the EM algorithm of Algorithm \ref{HardEM-RM} by a G-step where we update the model parameters via gradient descent. This variant of EM is known as the Generalized EM algorithm \cite{bishop2006pattern}, and here we refer to it as EG. All DRMM experiments were done with the NN-DRMM. Configurations of our models and the corresponding DCNs are provided in the Appendix~\ref{supp:config}. 



{\bf Supervised Training.} Supervised training results are shown in Table \ref{tab:test-error-sup-unsup} in the Appendix. {\it Shallow RFM:} The 1-layer RFM (RFM sup) yields similar performance to a Convnet of the same configuration (1.21\% vs. 1.30\% test error).  Also, as predicted by the theory of generative vs discriminative classifiers, EG training converges 2-3x faster than a DCN (18 vs.\ 40 epochs to reach 1.5\% test error, Fig. ~\ref{fig:test-error-drm-sup}, middle). {\it Deep RFM:} Training results from an initial implementation of the 2-layer DRFM EG algorithm 
converges $2-3\times$ faster than a DCN of the same configuration, while achieving a \textit{similar} asymptotic test error (Fig.~\ref{fig:test-error-drm-sup}, Right). Also, for completeness, we compare supervised training for a 5-layer DRMM with a corresponding DCN, and they show comparable accuracy (0.89\% vs 0.81\%, Table \ref{tab:test-error-sup-unsup}). 


\begin{figure*}[t]
	 \centering
        \begin{subfigure}
            \centering
            \includegraphics[width=0.24\textwidth]{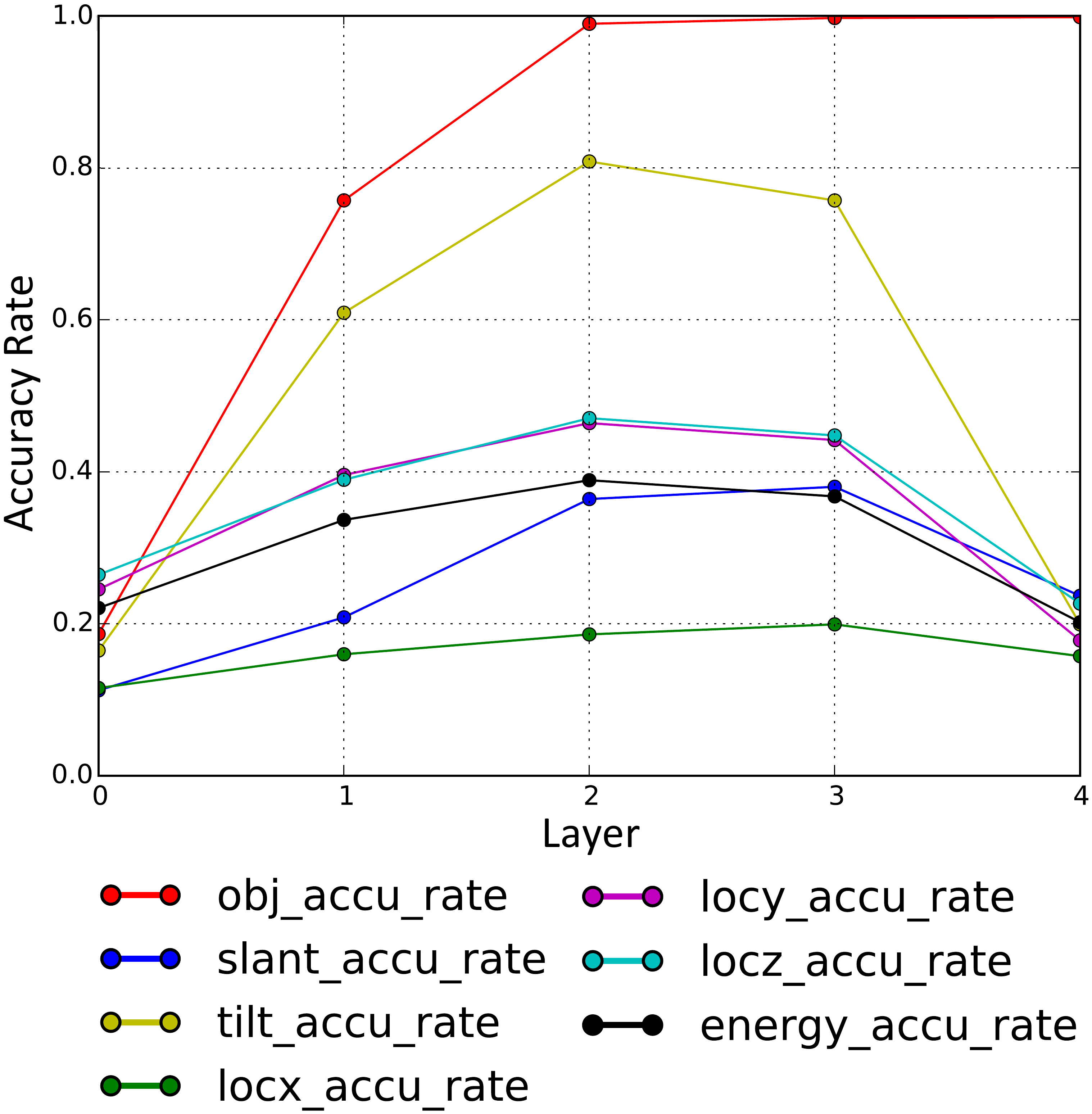}
            \label{fig:entanglement}
        \end{subfigure}
        \hfill
        \centering
        \begin{subfigure}
            \centering
            \includegraphics[width=0.33\textwidth]{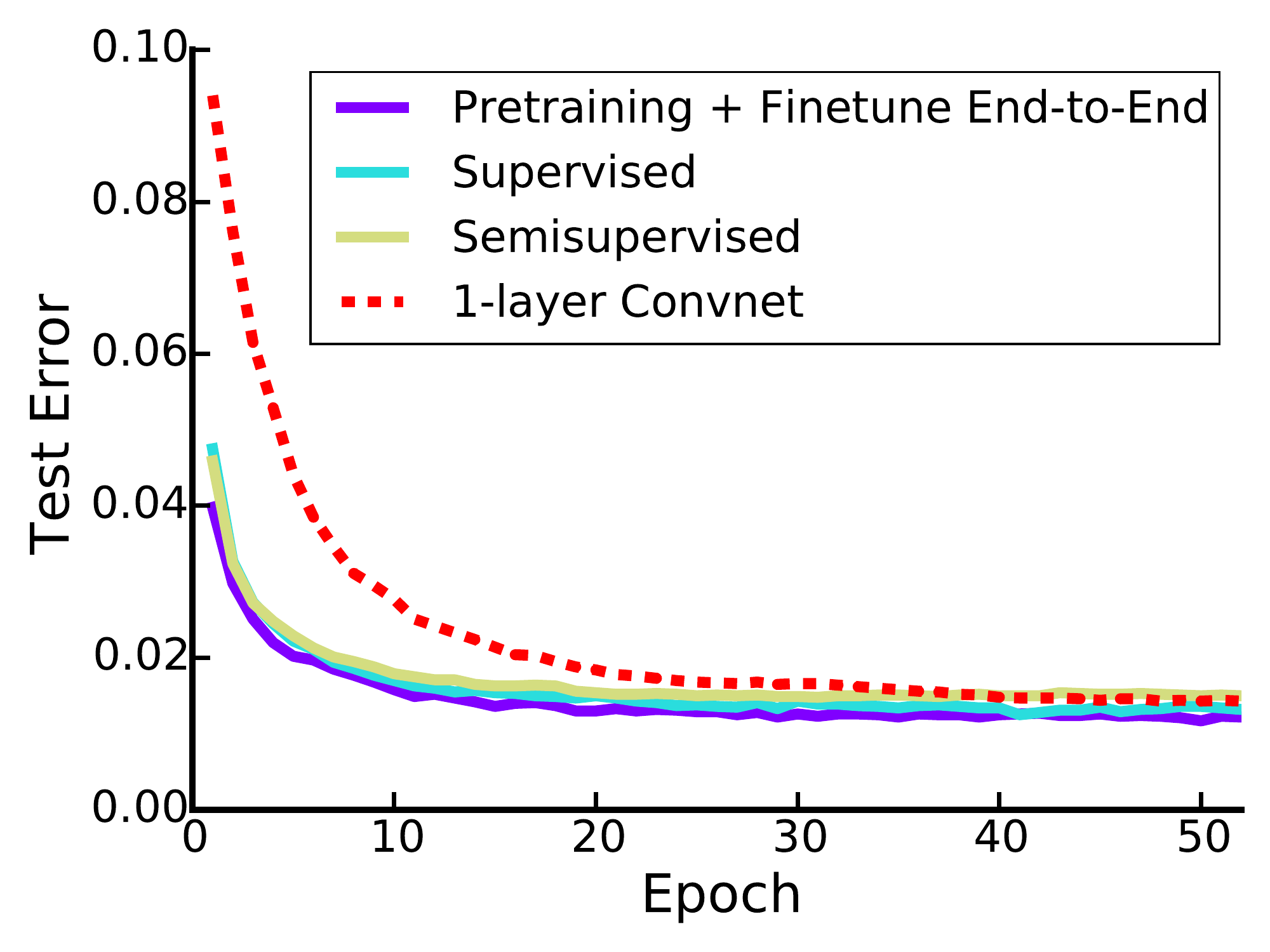}
            \label{fig:test-error-srm-sup}
        \end{subfigure}
        \hfill
        \begin{subfigure}
            \centering 
            \includegraphics[width=0.33\textwidth]{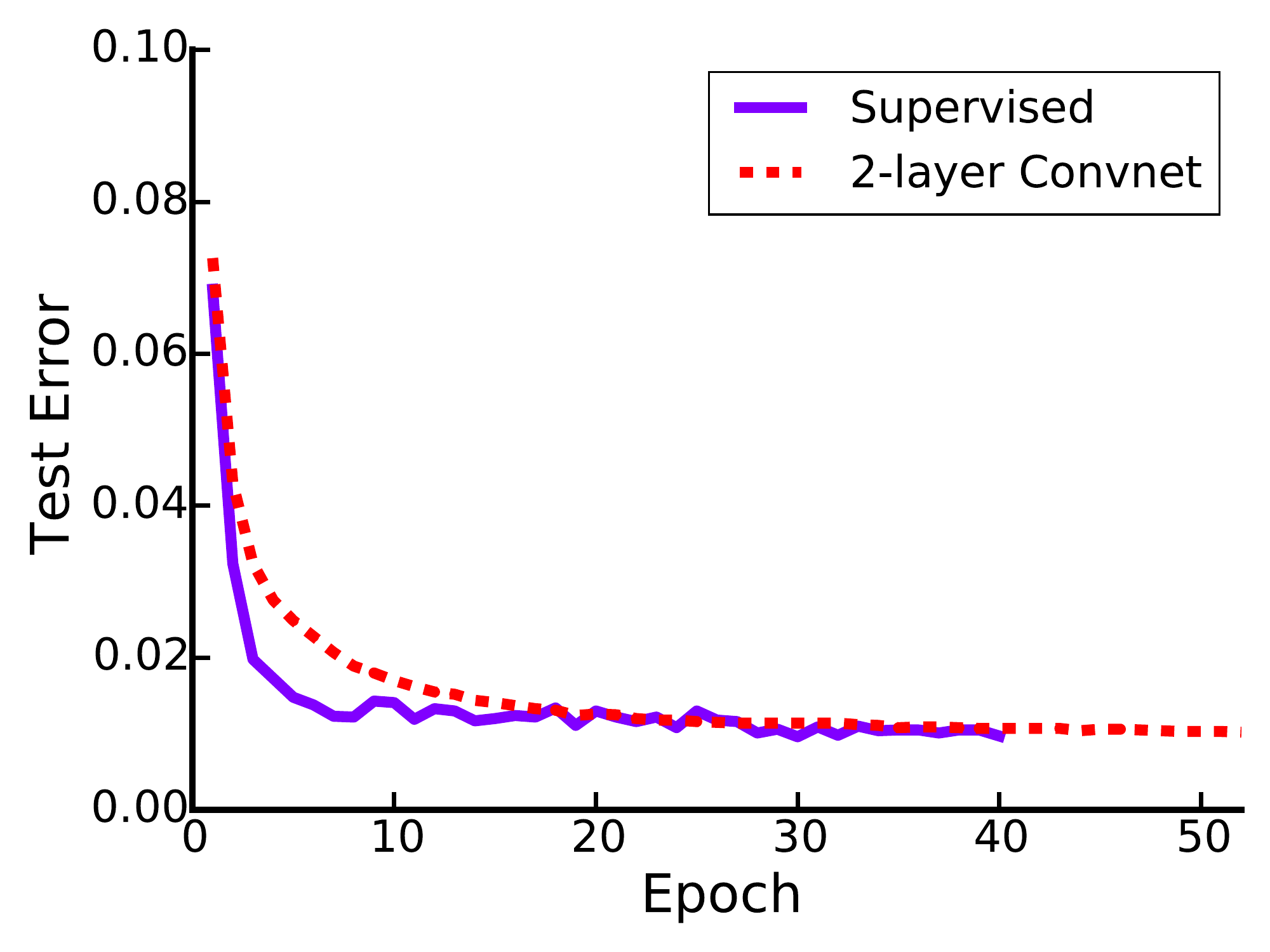}
             \label{fig:Semisupervised}
        \end{subfigure}
        \caption{Information about latent nuisance variables at each layer (Left), training results from EG for RFM (Middle) and DRFM (Right) on MNIST, as compared to DCNs of the same configuration.}
        \label{fig:test-error-drm-sup}
        \vspace*{0mm}
\end{figure*}


{\bf Unsupervised Training.} We train the RFM and the 5-layer DRMM unsupervised with $N_{U}$ images, followed by an end-to-end re-training of the whole model (unsup-pretr) using $N_{L}$ labeled images. The results and comparison to the SWWAE model are shown in Table \ref{tab:test-error-semi-sup}. The DRMM model outperforms the SWWAE model in both scenarios (Filters and reconstructed images from the RFM are available in the Appendix~\ref{tab:test-error-semi-sup-cifar10}.)

\begin{table}[h!]
\caption{Comparison of Test Error rates (\%) between best DRMM variants and other best published results on MNIST dataset for the semi-supervised setting (taken from \cite{zhao2015swwae}) with $N_{U} = 60K$ unlabeled images, of which $N_{L} \in \{100, 600, 1K, 3K \}$ are labeled.}
 \centering
 \small
\begin{tabular}{@{} lcccc @{}}
\multicolumn{1}{c}{\bf Model} &\multicolumn{1}{c}{\bf $N_{L}=100$} &\multicolumn{1}{c}{\bf $N_{L}=600$} &\multicolumn{1}{c}{\bf $N_{L}=1K$} &\multicolumn{1}{c}{\bf $N_{L}=3K$}
\\ \hline \noalign{\vskip 1mm}
Convnet \cite{lecun1998gradient} & $22.98$ & $7.86$ & $6.45$ & $3.35$ \\
MTC \cite{rifai2011manifold} & $12.03$ & $5.13$ & $3.64$ & $2.57$ \\
PL-DAE \cite{lee2013pseudo} & $10.49$ & $5.03$ & $3.46$ & $2.69$ \\
WTA-AE \cite{makhzani2015winner} & - & $2.37$ & $1.92$ & - \\
SWWAE dropout \cite{zhao2015swwae} & $8.71\pm0.34$ & $3.31\pm0.40$ & $2.83\pm0.10$ & $2.10\pm0.22$\\
 \hline
M1+TSVM \cite{kingma2014semi} & $11.82\pm0.25$ & $5.72$ & $4.24$ & $3.49$ \\
M1+M2 \cite{kingma2014semi} & $3.33\pm0.14$ & $2.59\pm0.05$ & $2.40\pm0.02$ & $2.18\pm0.04$ \\
Skip Deep  Generative Model \cite{maaloe2016auxiliary} & $1.32$ & - & - & - \\
LadderNetwork \cite{rasmus2015semi} & $1.06\pm0.37$ & - & $\mathbf{0.84\pm0.08}$ & - \\
Auxiliary Deep  Generative Model \cite{maaloe2016auxiliary} & $0.96$ & - & - & - \\
\hline
catGAN \cite{springenberg2015unsupervised} & $1.39\pm0.28$ & - & - & - \\
ImprovedGAN \cite{salimans2016improved} & $0.93 \pm0.065$ & - & - & - \\
\hline
RFM & $14.47$ & $5.61$ & $4.67$ & $2.96$ \\
DRMM 2-layer semi-sup  & $11.81$ & $3.73$ & $2.88$ & $1.72$ \\
DRMM 5-layer semi-sup  & $3.50$ & $\mathbf{1.56}$ & $1.67$ & $\mathbf{0.91}$\\
DRMM 5-layer semi-sup NN+KL & $\mathbf{0.57}$ & $-$ & $-$ & $-$\\
\hline \hline
SWWAE unsup-pretr \cite{zhao2015swwae}  & - & $9.80$ & $6.135$ & $4.41$ \\
RFM unsup-pretr & $16.2$ & $5.65$ & $4.64$ & $2.95$\\
DRMM 5-layer unsup-pretr & $\bf12.03$ & $\bf3.61$ & $\bf2.73$ & $\bf1.68$ \\
\hline
\end{tabular}
\label{tab:test-error-semi-sup}
\end{table}

{\bf Semi-Supervised Training.} For semi-supervised training, we use a randomly chosen subset of $N_{L} =$ 100, 600, 1K, and 3K labeled images and $N_{U}=60K$ unlabeled images from the training and validation set. Results are shown in Table~\ref{tab:test-error-semi-sup} for a RFM, a 2-layer DRMM and a 5-layer DRMM with comparisons to related work. The DRMMs performs comparably to state-of-the-art models. Specially, the 5-layer DRMM yields the best results when $N_{L}=3K$ and $N_{L}=600$ while results in the second best result when $N_{L}=1K$. We also show the training results of a 9-layer DRMM on CIFAR10 in Table~\ref{tab:test-error-semi-sup-cifar10} in Appendix~\ref{supp:more-results}. The DRMM yields comparable results on CIFAR10 with the best semi-supervised methods.  For more results and comparisons to other related work, see Appendix~\ref{supp:more-results}.

\qq
\section{Conclusions}
\qq

Understanding successful deep vision architectures is important for improving performance and solving harder tasks.
In this paper, we have introduced a new family of hierarchical generative models, whose inference algorithms for two different models reproduce deep convnets and decision trees, respectively.
Our initial implementation of the DRMM EG algorithm outperforms DCN back-propagation in both supervised and unsupervised classification tasks and achieves comparable/state-of-the-art performance on several semi-supervised classification tasks, with no architectural hyperparameter tuning \cite{nguyen2016semi, patel2016probabilistic}

{\bf Acknowledgments.} Thanks to Xaq Pitkow and Ben Poole for helpful discussions and feedback.
ABP and RGB were supported by IARPA via DoI/IBC contract D16PC00003.
RGB was also supported by NSF CCF-1527501, AFOSR FA9550-14-1-0088, ARO W911NF-15-1-0316, and ONR N00014-12-1-0579.
TN was supported by an NSF Graduate Reseach Fellowship and NSF IGERT Training Grant (DGE-1250104).

\newpage
\setlength{\bibsep}{0pt plus 0.3ex}
{\small \bibliography{MyReferences} }

\begin{thebibliography}{10}

\bibitem{anselmi2013magic}
F.~Anselmi, J.~Z. Leibo, L.~Rosasco, J.~Mutch, A.~Tacchetti, and T.~Poggio.
\newblock Magic materials: a theory of deep hierarchical architectures for
  learning sensory representations.
\newblock {\em MIT CBCL Technical Report}, 2013.

\bibitem{arora2013provable}
S.~Arora, A.~Bhaskara, R.~Ge, and T.~Ma.
\newblock Provable bounds for learning some deep representations.
\newblock {\em arXiv preprint arXiv:1310.6343}, 2013.

\bibitem{bishop2006pattern}
C.~M. Bishop.
\newblock {\em Pattern Recognition and Machine Learning}, volume~4.
\newblock Springer New York, 2006.

\bibitem{bishop2007generative}
C.~M. Bishop, J.~Lasserre, et~al.
\newblock Generative or discriminative? getting the best of both worlds.
\newblock {\em Bayesian Statistics}, 8:3--24, 2007.

\bibitem{breiman2001random}
L.~Breiman.
\newblock Random forests.
\newblock {\em Machine learning}, 45(1):5--32, 2001.

\bibitem{bruna2013invariant}
J.~Bruna and S.~Mallat.
\newblock Invariant scattering convolution networks.
\newblock {\em IEEE Transactions on Pattern Analysis and Machine Intelligence},
  35(8):1872--1886, 2013.

\bibitem{goodfellow2013maxout}
I.~J. Goodfellow, D.~Warde-Farley, M.~Mirza, A.~Courville, and Y.~Bengio.
\newblock Maxout networks.
\newblock {\em arXiv preprint arXiv:1302.4389}, 2013.

\bibitem{kingma2014semi}
D.~P. Kingma, S.~Mohamed, D.~J. Rezende, and M.~Welling.
\newblock Semi-supervised learning with deep generative models.
\newblock In {\em Advances in Neural Information Processing Systems}, pages
  3581--3589, 2014.

\bibitem{kingma2013auto}
D.~P. Kingma and M.~Welling.
\newblock Auto-encoding variational bayes.
\newblock {\em arXiv preprint arXiv:1312.6114}, 2013.

\bibitem{kschischang2001factor}
F.~R. Kschischang, B.~J. Frey, and H.-A. Loeliger.
\newblock Factor graphs and the sum-product algorithm.
\newblock {\em IEEE Transactions on Information Theory}, 47(2):498--519, 2001.

\bibitem{lecun1998gradient}
Y.~LeCun, L.~Bottou, Y.~Bengio, and P.~Haffner.
\newblock Gradient-based learning applied to document recognition.
\newblock {\em Proceedings of the IEEE}, 86(11):2278--2324, 1998.

\bibitem{lee2013pseudo}
D.-H. Lee.
\newblock Pseudo-label: The simple and efficient semi-supervised learning
  method for deep neural networks.
\newblock In {\em Workshop on Challenges in Representation Learning, ICML},
  volume~3, 2013.

\bibitem{maaloe2016auxiliary}
L.~Maal{\o}e, C.~K. S{\o}nderby, S.~K. S{\o}nderby, and O.~Winther.
\newblock Auxiliary deep generative models.
\newblock {\em arXiv preprint arXiv:1602.05473}, 2016.

\bibitem{makhzani2015winner}
A.~Makhzani and B.~J. Frey.
\newblock Winner-take-all autoencoders.
\newblock In {\em Advances in Neural Information Processing Systems}, pages
  2773--2781, 2015.

\bibitem{montufar2014number}
G.~F. Montufar, R.~Pascanu, K.~Cho, and Y.~Bengio.
\newblock On the number of linear regions of deep neural networks.
\newblock In {\em Advances in Neural Information Processing Systems}, pages
  2924--2932, 2014.

\bibitem{jordan2002discriminative}
A.~Ng and M.~Jordan.
\newblock On discriminative vs. generative classifiers: A comparison of
  logistic regression and naive bayes.
\newblock {\em Advances in neural information processing systems}, 14:841,
  2002.

\bibitem{nguyen2016semi}
T.~Nguyen, A.~B. Patel, and R.~G. Baraniuk.
\newblock Semi-supervised learning with deep rendering mixture model.
\newblock {\em CVPR(Submitted)}, 2017.

\bibitem{patel2015-arxiv}
A.~B. Patel, T.~Nguyen, and R.~G. Baraniuk.
\newblock A probabilistic theory of deep learning.
\newblock {\em arXiv preprint arXiv:1504.00641}, 2015.

\bibitem{patel2016probabilistic}
A.~B. Patel, T.~Nguyen, and R.~G. Baraniuk.
\newblock A probabilistic framework for deep learning.
\newblock {\em NIPS}, 2016.

\bibitem{rasmus2015semi}
A.~Rasmus, M.~Berglund, M.~Honkala, H.~Valpola, and T.~Raiko.
\newblock Semi-supervised learning with ladder networks.
\newblock In {\em Advances in Neural Information Processing Systems}, pages
  3532--3540, 2015.

\bibitem{rifai2011manifold}
S.~Rifai, Y.~N. Dauphin, P.~Vincent, Y.~Bengio, and X.~Muller.
\newblock The manifold tangent classifier.
\newblock In {\em Advances in Neural Information Processing Systems}, pages
  2294--2302, 2011.

\bibitem{rifai2011contractive}
S.~Rifai, P.~Vincent, X.~Muller, X.~Glorot, and Y.~Bengio.
\newblock Contractive auto-encoders: Explicit invariance during feature
  extraction.
\newblock In {\em Proceedings of the 28th international conference on machine
  learning (ICML-11)}, pages 833--840, 2011.

\bibitem{russakovsky2015imagenet}
O.~Russakovsky, J.~Deng, H.~Su, J.~Krause, S.~Satheesh, S.~Ma, Z.~Huang,
  A.~Karpathy, A.~Khosla, M.~Bernstein, et~al.
\newblock Imagenet large scale visual recognition challenge.
\newblock {\em International Journal of Computer Vision}, 115(3):211--252,
  2015.

\bibitem{salimans2016improved}
T.~Salimans, I.~Goodfellow, W.~Zaremba, V.~Cheung, A.~Radford, and X.~Chen.
\newblock Improved techniques for training gans.
\newblock {\em arXiv preprint arXiv:1606.03498}, 2016.

\bibitem{sheikh2014truncated}
A.-S. Sheikh, J.~A. Shelton, and J.~L{\"u}cke.
\newblock A truncated em approach for spike-and-slab sparse coding.
\newblock {\em Journal of Machine Learning Research}, 15(1):2653--2687, 2014.

\bibitem{simonyan2013deep}
K.~Simonyan, A.~Vedaldi, and A.~Zisserman.
\newblock Deep inside convolutional networks: Visualising image classification
  models and saliency maps.
\newblock {\em arXiv preprint arXiv:1312.6034}, 2013.

\bibitem{soatto2016visual}
S.~Soatto and A.~Chiuso.
\newblock Visual representations: Defining properties and deep approximations.
\newblock In {\em International Conference on Learning Representations}, 2016.

\bibitem{springenberg2015unsupervised}
J.~T. Springenberg.
\newblock Unsupervised and semi-supervised learning with categorical generative
  adversarial networks.
\newblock {\em arXiv preprint arXiv:1511.06390}, 2015.

\bibitem{springenberg2014striving}
J.~T. Springenberg, A.~Dosovitskiy, T.~Brox, and M.~Riedmiller.
\newblock Striving for simplicity: The all convolutional net.
\newblock {\em arXiv preprint arXiv:1412.6806}, 2014.

\bibitem{tang2012deep}
Y.~Tang, R.~Salakhutdinov, and G.~Hinton.
\newblock Deep mixtures of factor analysers.
\newblock {\em arXiv preprint arXiv:1206.4635}, 2012.

\bibitem{tomg-admm}
G.~Taylor, R.~Burmeister, Z.~Xu, B.~Singh, A.~Patel, and T.~Goldstein.
\newblock Training neural networks without gradients: A scalable admm approach.
\newblock {\em arXiv preprint arXiv:1605.02026}, 2016.

\bibitem{van2014factoring}
A.~van~den Oord and B.~Schrauwen.
\newblock Factoring variations in natural images with deep gaussian mixture
  models.
\newblock In {\em Advances in Neural Information Processing Systems}, pages
  3518--3526, 2014.

\bibitem{vapnik1998statistical}
V.~N. Vapnik and V.~Vapnik.
\newblock {\em Statistical learning theory}, volume~1.
\newblock Wiley New York, 1998.

\bibitem{zhao2015swwae}
J.~Zhao, M.~Mathieu, R.~Goroshin, and Y.~LeCun.
\newblock Stacked what-where autoencoders.
\newblock {\em arXiv preprint arXiv:1506.02351}, 2016.

\end{thebibliography}
\bibliographystyle{abbrv}
%
%
%
%
\newpage
\appendix

\section{From the Rendering Mixture Model Classifier to a DCN Layer}
\begin{proposition}[MaxOut Neural Networks] \label{eq:GRMM-MaxOutNN}
The discriminative relaxation of a noise-free Gaussian Rendering Mixture Model (GRMM) classifier with nuisance variable $g \in \G$ is a single layer neural net consisting of a local template matching operation followed by a piecewise linear activation function (also known as a \emph{MaxOut NN} \cite{goodfellow2013maxout}).
\end{proposition}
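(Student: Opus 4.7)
The plan is to start from the max-sum classifier in Eq.~(\ref{eqn:msc}), specialize to the noise-free Gaussian setting, perform the standard reduction to an inner-product form, and then invoke the discriminative relaxation to identify the resulting decision rule with a MaxOut layer.

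First I would write the GRMM classifier as $\hat c(I) = \argmax_{c\in\Cl}\max_{g\in\G}\,\ln p(I\mid c,g) + \ln p(c) + \ln p(g)$, dropping the switching variable $a$ since the noise-free GRMM does not involve occlusion/rendering-on-off. Substituting the Gaussian density and expanding the square $\|I-\mu_{cg}\|^{2}$ yields $\ln p(I\mid c,g) = \tfrac{1}{\sigma^{2}}\langle \mu_{cg}\mid I\rangle - \tfrac{1}{2\sigma^{2}}\|\mu_{cg}\|_{2}^{2} + \mathrm{const}$, after absorbing $\|I\|^{2}/(2\sigma^{2})$ into the constant using the normalization assumption $\|I\|^{2}=1$. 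Under uniform priors on $c$ and $g$, the $\ln p(c)$ and $\ln p(g)$ terms drop out of the $\argmax$/$\max$, giving the natural-parameter form
\begin{equation*}
\hat c(I) = \argmax_{c\in\Cl}\,\max_{g\in\G}\,\bigl(\langle w_{cg}\mid I\rangle + b_{cg}\bigr),\qquad w_{cg}\equiv \tfrac{1}{\sigma^{2}}\mu_{cg},\ b_{cg}\equiv -\tfrac{1}{2\sigma^{2}}\|\mu_{cg}\|_{2}^{2}.
\end{equation*}
This is exactly Eq.~(\ref{eqn:cnn1}) in the noise-free case (no ReLU, because $a$ is absent).

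Next I would apply the discriminative relaxation of \cite{jordan2002discriminative,bishop2007generative}: the generative parameter constraints $w_{cg}=\mu_{cg}/\sigma^{2}$ and $b_{cg}=-\|\mu_{cg}\|^{2}_{2}/(2\sigma^{2})$ are dropped, so that $\{w_{cg},b_{cg}\}$ become free and independent parameters. After relaxation, the per-class score is
\begin{equation*}
f_{c}(I)\;\equiv\;\max_{g\in\G}\bigl(\langle w_{cg}\mid I\rangle + b_{cg}\bigr),
\end{equation*}
and the classifier outputs $\argmax_{c\in\Cl} f_{c}(I)$. Each $f_{c}$ is the pointwise maximum of $|\G|$ affine functions of $I$, i.e.\ a convex piecewise-linear function; this is precisely the definition of a MaxOut unit \cite{goodfellow2013maxout}. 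Arranging one MaxOut unit per class $c\in\Cl$ produces a single-layer network that first performs local template matching against the filters $\{w_{cg}\}$ (local because $I$ is a patch centered at a pixel $x$, as noted in Sec.~\ref{sec:rm}) and then applies the MaxOut activation to collapse the nuisance dimension $g$.

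The main obstacle is ensuring the discriminative relaxation is stated cleanly: one must observe that, before relaxation, the biases $b_{cg}$ are tied to $\|w_{cg}\|_{2}^{2}$ through the same $\sigma^{2}$, and that freeing them is what makes the score affine rather than quadratic in $I$. A small but worth-noting subtlety is the role of the ``noise-free'' assumption, which lets us omit $a$ and hence avoid the additional ReLU nonlinearity obtained in the full RMM; without it the activation would be $\mathrm{ReLU}\circ\max_{g}$ rather than the pure MaxOut $\max_{g}$. Once these points are made precise, the identification with the MaxOut architecture is immediate.
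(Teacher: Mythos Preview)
Your argument follows essentially the same route as the paper and correctly identifies the per-class score $\max_{g}\bigl(\langle w_{cg}\mid I\rangle + b_{cg}\bigr)$ with a MaxOut unit after discriminative relaxation. One point needs correction: you misattribute the role of ``noise-free.'' In the paper the noise-free limit is what justifies replacing the marginalization $\sum_{g\in\G}p(I\mid c,g)p(c,g)$ by $\max_{g\in\G}$ (one hypothesis dominates all others in likelihood as $\sigma\to 0$); it has nothing to do with omitting the switching variable $a$. The variable $a$ is simply absent from this proposition because the basic GRMM is treated first; the ReLU appears in a separate proposition that \emph{adds} $a$ to the model. So your closing remark that ``noise-free lets us omit $a$'' should be replaced by ``noise-free lets us pass from sum-product to max-product.'' A secondary difference is that the paper's proof also makes the multi-channel structure explicit (conditionally independent channels $\omega\in\Omega$) and packages the patch-level inner product into a locally connected template matching operator $\lcmatch$, which is where the ``local'' in ``local template matching'' comes from; your appeal to the patch interpretation of $I$ covers the same ground more informally.
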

\begin{proof}
For transparency, we prove this claim exhaustively. Later claims will have simpler proofs.  We have
\allowdisplaybreaks
\begin{align*}
	\hat{c}(I)                 &\equiv \argmax_{c \in \Cl} p(c|I) \\
	                               &= \argmax_{c \in \Cl} \left\{ p(I | c)  p(c) \right\} \\
	                               &= \argmax_{c \in \Cl} \left\{ \sum_{g \in \G} p(I | c,g)  p(c,g) \right\} \\
	                               &\overset{(a)}{=} \argmax_{c \in \Cl} \left\{ \max_{g \in \G} p(I | c,g)  p(c,g) \right\} \\
	                               &= \argmax_{c \in \Cl} \left\{ \max_{g \in \G} \exp \left( \ln p(I | c,g)  + \ln p(c,g) \right) \right\} \\
	                               &\overset{(b)}{=} \argmax_{c \in \Cl} \left\{ \max_{g \in \G} \exp \left( \sum_{\omega} \ln p(I^{\omega} | c,g)  + \ln p(c,g) \right) \right\} \\
	                               &\overset{(c)}{=} \argmax_{c \in \Cl} \left\{ \max_{g \in \G} \exp \left( -\frac{1}{2} \sum_{\omega} \left< I^{\omega} - \mu_{cg}^{\omega} | \Sigma^{-1}_{cg} | I^{\omega} - \mu_{cg}^{\omega} \right>  + \ln p(c,g) -\frac{D}{2} \ln |\Sigma_{cg}| \right) \right\} \\
	                               &= \argmax_{c \in \Cl} \left\{ \max_{g \in \G} \exp \left( \sum_{\omega} \left< w_{cg}^{\omega} | I^{\omega} \right>  + b_{cg}^{\omega} \right) \right\} \\
	                               &\overset{(d)}{\equiv} \argmax_{c \in \Cl} \left\{ \exp \left( \max_{g \in \G}  \left\{ w_{cg} \lcmatch I \right\}  \right) \right\} \\
	                               &= \argmax_{c \in \Cl} \left\{ \max_{g \in \G}  \left\{ w_{cg} \lcmatch I \right\} \right\} \\	                               
	                               &= \textrm{Choose} \left\{ \textrm{MaxOutPool}(  \textrm{LocalTemplateMatch} (I))  \right\} \\
	                               &= \textrm{MaxOut-NN}(I; \theta).
\end{align*}

In line (a), we take the noise-free limit of the GRMM, which means that one hypothesis $(c,g)$ dominates all others in likelihood. In line (b), we assume that the image $I$ consists of multiple channels $\omega \in \Omega$, that are conditionally independent given the global configuration $(c,g)$ 
Typically, for input images these are color channels and $\Omega \equiv \{ R,G,B \}$ but in general $\Omega$ can be more abstract (e.g. as in feature maps). In line (c), we assume that the pixel noise covariance is isotropic and conditionally independent given the global configuration $(c,g)$, so that $\Sigma_{cg} = \sigma_{x}^{2} \mathbf{1}_{D} $ is proportional to the $D \times D$ identity matrix $\mathbf{1}_{D}$. In line (d), we defined the \emph{locally connected template matching operator} $\lcmatch$, which is a location-dependent template matching operation.
\end{proof}

Note that the nuisance variables $g \in \G$ are (max-)marginalized over, after the application of a local template matching operation against a set of filters/templates $\W \equiv \{ w_{cg} \}_{c \in \Cl, g \in \G}$.

\begin{lemma}[\textbf{Translational Nuisance $\drelax$ DCN Convolution}] 
\label{lem:trans-to-dcn-conv}
 The MaxOut template matching and pooling operation (from Proposition \ref{eq:GRMM-MaxOutNN}) for a set of translational nuisance variables $\G \equiv \T$ reduces to the traditional DCN convolution and max-pooling operation.
\end{lemma}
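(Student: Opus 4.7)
The plan is to unpack the MaxOut template matching operator $\lcmatch$ from Proposition~\ref{eq:GRMM-MaxOutNN} under the specific assumption that the nuisance set is the translation group $\T$, and show that it decomposes cleanly into the standard convolution-then-max-pool pipeline used in DCNs. The key structural fact to exploit is that when $g \in \T$ acts as a translation, the rendered templates factor as $w_{cg}(y) = w_c(y - g)$ for a single canonical template $w_c$ per class. This parameter-sharing across $g$ is what converts an otherwise exhaustive bank of $(c,g)$-indexed templates into a single filter $w_c$ that is slid over the image.

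First, I would write out the locally connected template matching operation explicitly at a patch centered on pixel $x$: $(w_{cg} \lcmatch I)(x) = \langle w_{cg} \mid I_x \rangle = \sum_y w_{cg}(y)\, I(x+y)$, where $I_x$ denotes the patch at location $x$. Substituting $w_{cg}(y) = w_c(y - g)$ and changing summation variable shows this inner product equals $(w_c \star_{\text{2D}} I)(x + g)$, i.e.\ a value of the ordinary translational convolution of the image with the single template $w_c$. So the $|\T|$-fold bank of locally connected matches against $\{w_{cg}\}_{g \in \T}$ collapses to a single convolutional feature map indexed by translation offset.

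Second, I would apply the MaxOut pooling from Proposition~\ref{eq:GRMM-MaxOutNN}: $\max_{g \in \T} (w_{cg} \lcmatch I)(x) = \max_{g \in \T} (w_c \conv I)(x + g)$. Reading this as a function of $x$, the outer maximum is precisely spatial max-pooling of the convolutional response over the translational window $\T$. Restricting $\T$ to a bounded pooling neighborhood recovers the usual finite-window max-pool; taking $\T$ to be a stride-$s$ subgroup recovers strided pooling. Hence the composite operation $\MaxPool \circ (w_c \conv \cdot)$ is obtained exactly.

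The routine bookkeeping is in step one (index changes, boundary/padding conventions, and distinguishing cross-correlation from convolution), and the main conceptual step is justifying the parameter-sharing identity $w_{cg} = T_g w_c$, which is really just the definition of ``$g$ is a translational nuisance'' applied to the rendering function $\mu_{cg} = T_g \mu_c$ and then propagated through the natural-parameter map $w_{cg} = \sigma^{-2}\mu_{cg}$. This is the only substantive assumption; once it is stated, the rest is algebraic rearrangement. I would close by remarking that replacing $\T$ with a different group (e.g.\ rotations, scalings) yields the analogous group-equivariant convolution, which both clarifies the scope of the lemma and sets up later generalizations used elsewhere in the paper.
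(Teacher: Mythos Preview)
Your proposal is correct and follows essentially the same approach as the paper: both hinge on the parameter-sharing identity $w_{ct} = T_t w_c$ inherited from $\mu_{ct} = T_t \mu_c$, then recognize the resulting inner products as a convolution indexed by $t$ and the outer $\max_t$ as max-pooling. The only presentational difference is that the paper works abstractly with the translation operator and its adjoint ($\langle T_t w_c \mid I \rangle = \langle w_c \mid T_{-t} I \rangle$) at the level of a single output unit $y_c(I)$, whereas you unpack things in pixel coordinates and carry an explicit patch location $x$ alongside the translation $g$; this is stylistic, not substantive.
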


\begin{proof}
 Let the activation for a single output unit be $y_{c}(I)$. Then we have
 \begin{align*} 
    y_{c}(I) &\equiv  \max_{g \in \G}  \left\{ w_{cg} \lcmatch I \right\}   \\
                &=  \max_{t \in \T}  \left\{ \left< w_{ct} | I \right> \right\}  \\
                &=  \max_{t \in \T}  \left\{ \left< T_{t} w_{c} | I \right> \right\}  \\
                &=  \max_{t \in \T}  \left\{ \left< w_{c} | T_{-t} I \right> \right\}  \\
                &=  \max_{t \in \T}  \left\{ (w_{c} \cnnconv I)_{t} \right\}  \\
                &=  \MaxPool( w_{c} \cnnconv I).
 \end{align*}
 
where $\cnnconv$ is the traditional DCN Convolution operator. Finally, vectorizing in $c$ gives us the desired result $y(I) = \MaxPool( \W \cnnconv I)$.
\end{proof}

\begin{proposition}[Max Pooling DCNs with ReLu Activations] \label{prop:detailedReLuProof}
The discriminative relaxation of a noise-free GRMM \textbf{with translational nuisances and random missing data} is a single convolutional layer of a traditional DCN. The layer consists of a generalized convolution operation, followed by a ReLu activation function and a Max-Pooling operation.
\end{proposition}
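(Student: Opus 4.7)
The plan is to adapt the proof of Proposition~\ref{eq:GRMM-MaxOutNN} by introducing the switching variable $a \in \A = \{\textrm{ON},\textrm{OFF}\}$ from the RMM generative model (random missing data), and then specializing the nuisance set to the translation group $\T$ so that Lemma~\ref{lem:trans-to-dcn-conv} applies. Concretely, I would start from $\hat{c}(I) = \argmax_{c} p(c\mid I)$, expand $p(c\mid I) \propto \sum_{g,a} p(I\mid c,g,a)\,p(c,g,a)$, and invoke the noise-free limit to convert the sums over the latent nuisances $(g,a)$ into maxima (the dominant-hypothesis argument from line (a) of the MaxOut proof). Under isotropic Gaussian pixel noise and conditional independence across channels, the log-likelihood collapses to a linear template-matching term $\langle w_{cg}\mid I\rangle + b_{cg}$, exactly as in the shallow max-sum classifier displayed in equation~(\ref{eqn:cnn1}).

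Next I would handle the switching variable $a$ separately from the translational nuisance $g$. Because $I = a\mu_{cg} + \text{noise}$, the $(c,g)$-conditional log-likelihood is $a(\langle w_{cg}\mid I\rangle + b_{cg}) + b_a$, where $b_a = \ln p(a)$. Maximizing over the binary variable $a\in\{0,1\}$ yields
\begin{equation*}
\max_{a\in\A} \, a(\langle w_{cg}\mid I\rangle + b_{cg}) + b_a \;=\; \max\bigl\{\langle w_{cg}\mid I\rangle + b_{cg} + b_1,\; b_0\bigr\} \;=\; \relu\bigl(\langle w_{cg}\mid I\rangle + \tilde b_{cg}\bigr) + b_0,
\end{equation*}
after absorbing $b_1-b_0$ into a shifted bias $\tilde b_{cg}$. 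This is exactly the rectified linear activation; the additive $b_0$ is a constant independent of $(c,g)$ and therefore irrelevant for the $\argmax$. The key algebraic point is that $\max\{u, 0\}$ is the image of the switching-variable maximization, i.e.\ the ReLU arises from marginalizing (in the max-sum sense) a latent Bernoulli rendering switch.

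I would then commute the two maximizations: since $\max_{g}\max_{a} = \max_{a,g}$ and the $a$-max only affects the inner $(c,g)$-activation, I can perform the $a$-max first to obtain $\relu(\langle w_{cg}\mid I\rangle + \tilde b_{cg})$ and then take $\max_{g\in\G}$ on the outside. Specializing to $\G = \T$, Lemma~\ref{lem:trans-to-dcn-conv} identifies the family $\{w_{ct}\}_{t\in\T}$ with translates of a single kernel $w_c$, turning the template-matching stage into the DCN convolution $w_c \cnnconv I$ and the subsequent $\max_{t}$ into spatial max-pooling. Composing the three stages gives $\hat{c}(I) = \argmax_c \MaxPool\bigl(\relu(w_c \cnnconv I + \tilde b_c)\bigr)$, which is precisely one convolutional layer of a DCN followed by the class-decision read-out.

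Finally, I would invoke the discriminative relaxation (Section~\ref{sec:gen-to-discr}): the generative parameter constraints $w_{cg} = \mu_{cg}/\sigma^2$ and $b_{cg} = -\|\mu_{cg}\|^2/(2\sigma^2)$ are dropped, leaving the kernels and biases as free parameters, matching standard DCN training. The only delicate point in the argument is justifying the interchange of the $a$-max with the $g$-max, and checking that the bias-splitting $b_0, b_1$ really produces the canonical $\relu$ rather than a generalized hinge; both reduce to the observation that one of the two candidates in the $a$-maximum is a constant, which is the structural feature of random missing data that singles out ReLU among piecewise-linear activations.
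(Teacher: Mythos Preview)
Your proposal is correct and follows essentially the same route as the paper: decompose the nuisance into $(t,a)$, compute the Gaussian log-posterior to obtain $a(\langle w_{ct}\mid I\rangle+b_{ct})+b_a+\text{const}$, max over the binary switch $a$ to produce the ReLU via the identity $\max\{u,v\}=\relu(u-v)+v$, and then invoke Lemma~\ref{lem:trans-to-dcn-conv} so that template matching over translates becomes convolution and the outer $\max_t$ becomes max-pooling. The only cosmetic difference is that the paper carries out the $a$-max and the translation-to-convolution substitution in a single displayed step (its line~(b)), whereas you separate them; your remark about absorbing $b_1-b_0=\ln\frac{p(a=1)}{p(a=0)}$ into the bias is exactly the paper's redefinition of the convolution operator.
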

\begin{proof}
We will model completely random missing data as a nuisance transformation $a \in \A \equiv \{\textrm{keep}, \textrm{drop} \}$, where $a=\textrm{keep}=1$ leaves the rendered image data untouched, while $a=\textrm{drop}=0$ throws out the entire image after rendering. Thus, the switching variable $a$ models missing data. Critically, whether the data is missing is assumed to be \textit{completely random}  and thus independent of any other task variables, including the measurements (i.e. the image itself). Since the missingness of the evidence is just another nuisance, we can invoke Proposition \ref{eq:GRMM-MaxOutNN} to conclude that the discriminative relaxation of a noise-free GRMM with random missing data is also a MaxOut-DCN, but with a specialized structure which we now derive.

Mathematically, we decompose the nuisance variable $g \in \G$ into two parts $g = (t,a) \in \G = \T \times \A$, and then, following a similar line of reasoning as in Proposition \ref{eq:GRMM-MaxOutNN}, we have
\begin{align*}
	\hat{c}(I)                 &= \argmax_{c \in \Cl} \max_{g \in \G} p(c,g|I) \\
	                               &= \argmax_{c \in \Cl} \left\{ \max_{g \in \G}  \left\{ w_{cg} \lcmatch I \right\} \right\} \\
	                               &\overset{(a)}{=} \argmax_{c \in \Cl} \left\{ \max_{t \in \T} \max_{a \in \A}  \left\{ a(\langle w_{ct} | I\rangle + b_{ct})   + b_{ct}' + b_{a} + b_{I}'\right\} \right\} \\
	                               &\overset{(b)}{=} \argmax_{c \in \Cl} \left\{ \max_{t \in \T}  \left\{ \max \{ (w_{c} \cnnconv I)_{t}, 0 \right\} + b_{ct}' + b_{\textrm{drop}}' + b_{I}' \} \right\} \\
	                               &\overset{(c)}{=} \argmax_{c \in \Cl} \left\{ \max_{t \in \T}  \left\{ \max \{ (w_{c} \cnnconv I)_{t}, 0 \right\} + b_{ct}' \} \right\} \\
	                               &\overset{(d)}{=} \argmax_{c \in \Cl} \left\{ \max_{t \in \T}  \left\{ \max \{ (w_{c} \cnnconv I)_{t}, 0 \right\} \} \right\} \\
	                               &= \textrm{Choose} \left\{ \textrm{MaxPool}( \textrm{ReLu} ( \textrm{DCNConv} (I)))  \right\} \\
	                               &= \textrm{DCN}(I; \theta).
\end{align*}
In line (a) we calculated the log-posterior (ignoring $(c,g)$-independent constants)
\begin{align*}
	\ln p(c,g | I)  &= \ln p(c,t,a | I)  \\
			&= \ln p(I | c,t,a)  + \ln p(c,t,a) + \ln p(I) \\
			&= \frac{1}{\sigma_x^2} \langle a \mu_{ct} | I \rangle  -\frac{1}{2\sigma_x^2} (\| a \mu_{ct} \|_2^{2} + \| I \|_2^{2})  )  + \ln p(c,t,a)\\
	                 &\equiv  a(\langle w_{ct} | I\rangle + b_{ct}) +  b_{ct}' + b_{a} + b_{I}',
\end{align*}
where $a \in \{0,1\}, \,w_{ct} \equiv \frac{1}{\sigma^{2}_{x}}\mu_{ct},\, b_{ct} \equiv -\frac{1}{2\sigma_x^2}\| \mu_{ct} \|_2^{2},\, b_{a} \equiv \ln p(a),\, b_{ct}' \equiv \ln p(c,t),\, b_{I}' \equiv -\frac{1}{2\sigma_x^2} \| I \|_{2}^{2}$. In line (b), we use Lemma \ref{lem:trans-to-dcn-conv} to write the expression in terms of the DCN convolution operator, after which we invoke the identity $\max \{u,v\} = \max \{u-v,0 \}+v \equiv \textrm{ReLu}(u-v) + v$ for real numbers $u,v \in \R$. Here we've defined $b_{\textrm{drop}}' \equiv \ln p(a=\textrm{drop})$ and we've used a slightly modified DCN convolution operator $\cnnconv$ defined by $w_{ct} \cnnconv I \equiv w_{ct} \star I +  b_{ct} + \ln \left( \frac{p(a=\textrm{keep})}{p(a=\textrm{drop})} \right)$. Also, we observe that all the primed constants are independent of $a$ and so can be pulled outside of the $\max_{a}$. In line(c), the two primed constants that are also independent of $c,t$ can be dropped due to the $\argmax_{ct}$. Finally, in line (d), we assume a uniform prior over $c,t$.
The resulting sequence of operations corresponds \emph{exactly} to those applied in a single convolutional layer of a traditional DCN.
\end{proof}

\section{From the Deep Rendering Mixture Model to DCNs }
\label{supp:drmm-to-dcn}

Here we define the DRMM in full detail.

\begin{definition}[\textbf{Deep Rendering Mixture Model (DRMM)}] \label{defn:drmm}
The \textit{Deep Rendering Mixture Model (DRMM)} is a deep Gaussian Mixture Model (GMM) with special constraints on the latent variables. Generation in the DRMM takes the form:
\begin{align*}
    c^{(L)} &\sim \textrm{Cat}(\{\pi_{c^{(L)}}\}) \\
    g^{(\ell)} &\sim \textrm{Cat}(\{\pi_{g^{(\ell)}}\}) \,\, \forall \ell\in [L] \equiv \{1,2,\dots,L\}\\
    \mu_{c^{(L)}g} &\equiv \Lambda_{g}\mu_{c^{(L)}} \\
             &\equiv \Lambda^{(1)}_{g^{(1)}}\Lambda^{(2)}_{g^{(2)}} \dots \Lambda^{(L-1)}_{g^{(L-1)}}\Lambda^{(L)}_{g^{(L)}} \mu_{c^{(L)}} \\
    I &\sim \mathcal{N}(\mu_{c^{(L)}g},\Psi) \\ &= \mathcal{N}(\mu_{c^{(L)}g},\sigma^{2}\textbf{1}_{D^{(0)}})\,
\end{align*}
where the latent variables, parameters, and helper variables are defined as
\begin{align*}
g^{(\ell)} &\equiv \left(g_{x^{(\ell)}}^{(\ell)}\right)_{x^{(\ell)}\in \mathcal{X}^{(\ell)}} \\
t^{(\ell)} &\equiv \left(t_{x^{(\ell)}}^{(\ell)}\right)_{x^{(\ell)}\in \mathcal{X}^{(\ell)}}, \, a^{(\ell)} \equiv \left(a_{x^{(\ell)}}^{(\ell)}\right)_{x^{(\ell)}\in \mathcal{X}^{(\ell)}} \\
g_{x^{(\ell)}}^{(\ell)} &\equiv \left(t_{x^{(\ell)}}^{(\ell)}, a_{x^{(\ell)}}^{(\ell)}\right) \\
t_{x^{(\ell)}}^{(\ell)} &\in \{\textrm{UL}, \textrm{UR}, \textrm{LL}, \textrm{LR}\} \\
a_{x^{(\ell)}}^{(\ell)} &\in \{0, 1\} \equiv \{\textrm{OFF}, \textrm{ON}\}
\\
x^{(\ell)} &\in \mathcal{X}^{(\ell)} \equiv \{\textrm{pixels in level } \ell\} \in \R^{D^{(\ell)}}\\
\Lambda^{(\ell)}_{g^{(\ell)}} &= \Lambda^{(\ell)}_{t^{(\ell)},a^{(\ell)}} \in \R^{D^{(\ell-1)} \times D^{(\ell)}} \\
&=  T^{(\ell)}_{t^{(\ell)}} Z^{(\ell)}\Gamma^{(\ell)}M^{(\ell)}_{a^{(\ell)}} \\
M^{(\ell)}_{a^{(\ell)}} &\equiv \textrm{diag}\left(a^{(\ell)}\right) \in \R^{D^{(\ell)} \times D^{(\ell)}} \\
T^{(\ell)}_{t^{(\ell)}} &\equiv \textrm{translation operator to position } t^{(\ell)} \in \R^{D^{(\ell-1)} \times D^{(\ell-1)}} \\
Z^{(\ell)} &\equiv \textrm{zero-padding operator} \in \R^{D^{(\ell-1)} \times F^{(\ell)}}\\
\Gamma^{(\ell)} &\equiv \underset{x^{(\ell)}\in \mathcal{X}^{(\ell)}}{\otimes} \underbrace{\Gamma_{x^{(\ell)}}^{(\ell)}}_{F^{(\ell)} \times 1} \in \R^{F^{(\ell)} \times D^{(\ell)}}\\
\Gamma_{x^{(\ell)}}^{(\ell)} &\equiv \{\textrm{filter bank at level } \ell\} \in \R^{F^{(\ell)}}\\
F^{(\ell)} &\equiv W^{(\ell)} H^{(\ell)} C^{(\ell)}\\
&=\textrm{ size of the core templates at layer } (\ell)
\end{align*}
\end{definition}

For simplicity, in the following sections, we will use $c$ and $c^{(L)}$ interchangeably. 





\begin{definition}[\textbf{Nonnegative Deep Rendering Mixture Model (NN-DRM)}] \label{defn:nn-drmm}
The \textit{Nonnegative Deep Rendering Mixture Model} is defined as a DRMM (Definition \ref{defn:drmm}) with additional nonnegativity constraint(s) on the intermediate latent variables (rendered templates):
\begin{align}
     z^{(\ell)}_{n} &= \Lambda_{g^{(\ell + 1)}_{n}} \cdots \Lambda_{g^{(L)}_{n}} \mu_{c^{(L)}_{n}} \ge 0 \quad \forall \ell \in \{1,\dots,L\}
\end{align}
\end{definition}

Following the same line of reasoning as in the main text, we will derive the Hard EM algorithm for the DRMM model.

\subsection{E-step: Computing the Soft Responsibilities}

\begin{align*}
\gamma_{ncg} &\equiv p(c,g|I_{n}) \\
&= \frac{p(I_{n}|c,g;\theta)p(c,g|\theta)}{\sum_{c,g}p(I_{n}|c,g;\theta)p(c,g|\theta)} \\
&= \frac{\pi_{cg}|\Psi|^{-1/2}\exp\left(-\frac{1}{2} \| I_{n} - \mu_{cg}\|_{\Psi^{-1}}^2 \right)}{Z},
\end{align*}
where the partition function $Z$ is defined as
\[
    Z(\theta) \equiv \sum_{c,g} \pi_{cg}|\Psi|^{-1/2}\exp\left(-\frac{1}{2} \| I_{n} - \mu_{cg}\|_{\Psi^{-1}}^2 \right).
\]
Since the numerator and denominator both contain $|\Psi|^{-1/2}$, the responsibilities simplify to
\begin{align}
   \gamma_{ncg} &= \frac{\pi_{cg} \exp\left(-\frac{1}{2} \| I_{n} - \mu_{cg}\|_{\Psi^{-1}}^2 \right)}{Z'},
\end{align}
where $Z'$ is defined as
\[
   Z'(\theta) \equiv \sum_{cg} \pi_{cg} \exp\left(-\frac{1}{2} \| I_{n} - \mu_{cg}\|_{\Psi^{-1}}^2 \right).
\]

\subsection{E-step: Computing the Hard Responsibilities}
Assuming isotropic noise $\Psi = \sigma^2 1_D$ and taking the zero-noise limit $\sigma^2 \rightarrow 0$, the term in the denominator $Z'(\theta)$ for which $\|I_{n} - \mu_{cg}\|_2^2$ is smallest will go to zero most slowly. Hence the responsibilities $\gamma_{ncg}$ will all approach zero, except for one term $(c^{*},g^{*})$, for which the $\gamma_{nc^{*}g^{*}}$ will approach one. \footnote{Technically, there can be multiple maximizers and the algorithms below can be generalized to handle this case. But we focus on the case with just one unique maximum for simplicity.} Thus, the soft responsibilities become hard responsibilities in the zero-noise limit:

\begin{align} \label{eqn:hard-no-latent-rs}
\gamma_{ncg} \overset{\sigma \rightarrow 0}{\longrightarrow} r_{ncg} \equiv \left\{
	\begin{array}{ll}
		1,  & \mbox{if } (c,g) = \amax_{c'g'} -\frac{1}{2} \| I_{n} - \mu_{c'g'} \|_2^2 \\
		0, & \mbox{otherwise}
	\end{array}
\right.
\end{align}

\subsection{Useful Lemmas}
In order to derive the E-step for the DRMM, we will need a few simple theoretical results. We prove them here.

\begin{definition}[\textbf{Masking Operator}]
Let $a \in \{0,1\}^d$ be a binary vector (mask) and let $\Lambda \in \R^{D \times d}$ be a real matrix. Then the \textbf{masking operator} $\Mask_{a}(\Lambda) \in \R^{D \times d}$ is defined as
\[
    \Mask_a(\Lambda) \equiv \Lambda \cdot M_a \equiv \Lambda \cdot \diag(a),
\]
where $M_a \equiv \diag(a) \in \R^{d \times d}$ is the diagonal masking matrix.
\end{definition}

\begin{lemma}\label{}
The action of a masking operator on a vector $z \in \R^d$ can be written in several equivalent ways:
\begin{align*}
    \Mask_a(\Lambda) z &= \Lambda \cdot \diag(a) \cdot z \\
    &= \Lambda \cdot \diag(a) \cdot \diag(a) \cdot z \\
    &= \Lambda [:, a] \cdot z[a] \\
    &= \Lambda ( a \odot z ).
\end{align*}
Here $\odot$ denotes the elementwise (Hadamard) product between two vectors and $\Lambda[:,a]$ is numpy notation for the subset of columns $\{ j \in [D]: a_j = 1 \}$ of $\Lambda$.
\end{lemma}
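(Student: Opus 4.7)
The plan is to verify the chain of four equalities by direct componentwise computation, leveraging the fact that $a$ is a binary vector so $\diag(a)$ is an idempotent projection matrix. I would treat the first expression $\Lambda \cdot \diag(a) \cdot z$ as given by the definition of $\Mask_a$ and then propagate equalities to the right.

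First I would handle the second equality $\Lambda \diag(a) z = \Lambda \diag(a)\diag(a) z$ by noting that for a binary vector $a \in \{0,1\}^d$ one has $a_i^2 = a_i$ componentwise, hence $\diag(a)^2 = \diag(a \odot a) = \diag(a)$; so inserting an extra factor of $\diag(a)$ is harmless. Next, for the third equality, I would observe that $\diag(a) z$ simply zeros out every coordinate of $z$ indexed by $i$ with $a_i = 0$. Therefore $\Lambda \cdot \diag(a) \cdot z = \sum_{i : a_i = 1} \Lambda_{:, i} z_i$, which is exactly the definition of $\Lambda[:, a] \cdot z[a]$ in numpy-slicing notation (the boolean mask $a$ selects the surviving columns of $\Lambda$ and the surviving entries of $z$). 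Finally, the fourth equality follows directly from the definition of the Hadamard product: $\diag(a) z$ is by construction the vector whose $i$-th entry is $a_i z_i$, i.e.\ $a \odot z$; thus $\Lambda \diag(a) z = \Lambda (a \odot z)$.

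No single step is really hard; the main subtlety is just keeping the notation consistent (in particular, interpreting $\Lambda[:, a]$ and $z[a]$ as boolean-mask slicing rather than as integer indexing). A clean way to present the argument is to write everything in index form: the $j$-th component of $\Mask_a(\Lambda) z$ equals $\sum_{i=1}^d \Lambda_{ji} a_i z_i$, and then observe that each of the four listed expressions evaluates to this same sum, either by definition or by the binary idempotency $a_i^2 = a_i$. This gives all the equalities in one shot and avoids an arbitrary ordering of the chain.
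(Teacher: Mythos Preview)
Your proposal is correct and follows essentially the same route as the paper: the first equality is by definition, the second by binary idempotency $a_i^2=a_i$, and the remaining two by unpacking what $\diag(a)z$ does componentwise. Your write-up is actually more explicit than the paper's, which dispatches the last two equalities in a single phrase (``associativity of matrix multiplication''); your index-form summary $\sum_i \Lambda_{ji}a_i z_i$ is a clean way to tie everything together.
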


\begin{proof}
The first equality is by definition. The second equality is a result of $a$ being binary since $a_i^2 = a_i$ for $a_i \in \{0,1\}$. The third and fourth equalities result from the associativity of matrix multiplication.
\end{proof}

\begin{lemma}[\textbf{Optimization with Masking Operators}] \label{lemma-mask-opt}
Let $z, u \in \R^{D \times 1}$. Consider the optimization problem
\begin{align}\label{mask-opt}
    \max_{a \in \{0, 1\}^D} \Mask_a(z^T) u = \max_{a \in \{0, 1\}^D} z^T M_a u
\end{align}
where $M_a \equiv \diag(a)$. Then the optimization can be solved in closed form as:
\begin{enumerate}[(a)]
    \item $\underset{a \in \{0, 1\}^D}{\max} \Mask_a(z^T) u = \mathbf{1}_D^T \relu(z \odot u)$.
    \item $\hat{a} \equiv \underset{a \in \{0, 1\}^D}{\amax} \Mask_a(z^T) u  = \iver{ z \odot u > 0 } \in \{0, 1\}^D$.
    \item $M_{\hat{a}} u = \sign(z) \odot \relu \left(\sign(z) \odot u\right)$.
    \item If $z \ge 0$, then $\hat{a} \equiv \underset{a \in \{0, 1\}^D}{\amax} \Mask_a(z^T) u  = \iver{ u > 0 } \in \{0, 1\}^D$ is a maximizer, for which $M_{\hat{a}} u = \relu \left(u\right)$.
\end{enumerate}
\end{lemma}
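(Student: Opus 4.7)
The plan is to exploit the fact that the bilinear form $z^{T} M_{a} u$ decomposes into a sum of $D$ independent scalar terms, one per coordinate, so the combinatorial optimization over $a \in \{0,1\}^{D}$ reduces to $D$ independent trivial scalar problems. Concretely, I would first observe that $z^{T} M_{a} u = \sum_{i=1}^{D} a_{i} z_{i} u_{i}$, and since each $a_{i}$ appears in exactly one summand with no coupling constraints, the joint maximum equals the sum of the per-coordinate maxima.

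For each coordinate, the scalar problem $\max_{a_{i} \in \{0,1\}} a_{i} z_{i} u_{i}$ has the obvious solution: turn $a_{i}$ on precisely when the product $z_{i} u_{i}$ is strictly positive, achieving value $\max\{z_{i} u_{i}, 0\} = \relu(z_{i} u_{i})$. Summing across $i$ yields part (a), since $\sum_{i} \relu(z_{i} u_{i}) = \mathbf{1}_{D}^{T}\,\relu(z \odot u)$. Collecting the per-coordinate maximizers into a vector yields part (b), $\hat{a} = \iver{z \odot u > 0}$. (I would briefly note that when $z_{i} u_{i} = 0$ either choice of $a_{i}$ is optimal, so the argmax is not unique; the convention in the statement picks $a_{i}=0$ on the boundary.)

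For part (c), I would compute $M_{\hat{a}} u$ coordinatewise: its $i$-th entry is $\iver{z_{i} u_{i} > 0}\, u_{i}$, i.e.\ $u_{i}$ whenever $z_{i}$ and $u_{i}$ are nonzero and of equal sign, and $0$ otherwise. To match the claimed form $\sign(z) \odot \relu(\sign(z) \odot u)$, I would split into three cases on $\sign(z_{i}) \in \{+1,-1,0\}$ and check each: when $\sign(z_{i}) = +1$ the expression reduces to $\relu(u_{i}) = u_{i}\cdot \iver{u_{i}>0}$; when $\sign(z_{i}) = -1$ it reduces to $-\relu(-u_{i}) = u_{i}\cdot \iver{u_{i}<0}$; when $z_{i} = 0$ both sides vanish. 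This case analysis is the only mildly tedious step and is the main obstacle, but it is routine.

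Finally, part (d) follows immediately from (b): when $z \ge 0$, the condition $z_{i} u_{i} > 0$ is equivalent to $\{z_{i} > 0\} \wedge \{u_{i} > 0\}$, which differs from $\{u_{i} > 0\}$ only at indices where $z_{i} = 0$, and on those indices $z_{i} u_{i} = 0$ so flipping $a_{i}$ does not change the objective. Hence $\iver{u > 0}$ is a valid (though not unique) maximizer, and substituting into $M_{\hat{a}} u$ gives $\relu(u)$ coordinatewise. This lemma is the key algebraic engine that later lets the latent switching variables $a$ in the DRMM collapse into ReLU operations during max-sum inference.
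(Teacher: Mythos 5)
Your proposal is correct and follows essentially the same route as the paper: expand $z^{T}M_{a}u$ into $\sum_{i}a_{i}z_{i}u_{i}$, decouple into independent scalar maximizations to get (a) and (b), substitute the maximizer for (c), and specialize to $z\ge 0$ for (d). Your explicit three-way case analysis on $\sign(z_{i})$ in part (c) is if anything slightly more careful than the paper's algebraic manipulation, which inserts $\sign(z)\odot\sign(z)=\mathbf{1}_{D}$ (an identity that fails coordinatewise when $z_{i}=0$, though both sides vanish there so the conclusion is unaffected).
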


\begin{proof}
\textit{(a)} The maximum value can be computed as
\begin{align*}
    v^{\star} &\equiv \max_{a \in \{0, 1\}^D} \Mask_a(z^T) u \\
    &= \max_{a \in \{0, 1\}^D} z^T \diag(a) u \\
    &= \max_{a \in \{0, 1\}^D} \sum_{i \in [D]} z_i a_i u_i \\    
    &= \sum_{i \in [D]} \max_{a_i \in \{0, 1\}} a_i (z_i u_i) \\
    &\equiv \sum_{i \in [D]} \hat{a}_i (z_i u_i) \\
    &= \sum_{i \in [D]} \iver{z_i u_i > 0} \cdot z_i u_i \\
    &= \sum_{i \in [D]} \relu(z_i u_i) \\
    &= \mathbf{1}_D^T \relu(z \odot u). \\
\end{align*}
\textit{(b)} In the 4th line the vector optimization decouples into a set of independent scalar optimizations $\max_{a_i \in \{0, 1\}} a_i (z_i u_i)$, each of which is solvable in closed form: $\hat{a}_i \equiv \iver{z_i u_i > 0}$. Hence, the optimal solution $\hat{a}$ is given by $\hat{a} = \iver{ z \odot u > 0 }$. \\
\textit{(c)} Substituting in $\hat{a}$, we get
\begin{align*}
\Mask_{\hat{a}} u 
    &= u \odot \iver{ z \odot u > 0 }\\
    &= \underbrace{(\sign(z) \odot \sign(z))}_{\mathbf{1}_D} \odot u \odot \iver{ \sign(z) \odot u > 0 } \\    
    &= \sign(z) \odot (\sign(z) \odot u) \odot \iver{ \sign(z) \odot u > 0 } \\
    &= \sign(z) \odot \relu\left(\sign(z) \odot u\right),
\end{align*}
where in the third and fourth equalities we have used the associativity of elementwise multiplication and the definition of ReLu, respectively.\\
\textit{(d)} If $z \ge 0$, then when $z_{i} > 0$,\, $\hat{a}_{i} = \iver{u_{i}>0}$, and when $z_{i}=0$, $\hat{a}_{i}$ can be either $0$ or $1$ since then $\max_{a_i \in \{0, 1\}} a_i (z_i u_i) = 0 \, \forall a_{i} \in \{0,1\}$. Therefore, if $z \ge 0$, $\hat{a} = \iver{ u > 0 }$ is a solution of the optimization \ref{mask-opt}. It follows that $M_{\hat{a}} u = \iver{ u > 0 }u = \relu \left(u\right)$.

\end{proof}

\begin{lemma}[\textbf{Optimization with ``Row" Max-Marginal}]
\label{lemma-vec-max}
Let $z, u \in \R^{D \times 1}$. Consider the optimization problem
\begin{align}\label{max-opt}
    \max_{t \in \mathcal{T}^D} z^{T}u(t) = \max_{t \in \mathcal{T}^D} \sum_{x} z_{x}u(t)_{x}
\end{align}
where $\mathcal{T}$ is the set of possible \textbf{fine-scale} translations at location $x$. Also,
\begin{align}
    t\equiv\begin{bmatrix}
         \vdots \\
         t_{x} \\
         \vdots
        \end{bmatrix}
    \, \textrm{ and } \,
    u(t)\equiv\begin{bmatrix}
         \vdots \\
         u_{x}(t_{x}) \\
         \vdots
        \end{bmatrix}
\end{align}
Then the optimization can be solved as:
\begin{enumerate}[(a)]
\item $\underset{t \in \mathcal{T}^{D}}{\max} z^{T}u(t) = \underset{x}{\sum}|z_{x}|\underset{t_{x} \in \mathcal{T}}{\max}\,\sign(z_{x})u_{x}(t_{x})$
\item $\hat{t} = \underset{t \in \mathcal{T}^{D}}{\amax}\, z^{T}u(t)=\underset{t}{\amax}\, \sign(z) \odot u(t)=\begin{bmatrix}\vdots \\
         \underset{t_{x}}{\amax}\, \sign(z_{x})u_{x}(t_{x}) \\
         \vdots\end{bmatrix}$
\item $u(\hat{t})=\sign(z) \odot \underset{t}{\max}\, \left(\sign(z) \odot u(t)\right)=\begin{bmatrix}\vdots \\
         \sign(z_{x})\underset{t_{x}}{\max}\, \sign(z_{x})u_{x}(t_{x}) \\
         \vdots\end{bmatrix}$
         
\item If $z \ge 0$, then $\hat{t} =\underset{t}{\amax}\, u(t)=\begin{bmatrix}\vdots \\
\underset{t_{x}}{\amax}\, u_{x}(t_{x}) \\
\vdots\end{bmatrix}$ is a maximizer for which $u(\hat{t})=\underset{t}{\max}\, \left(u(t)\right)=\begin{bmatrix}\vdots \\
         \underset{t_{x}}{\max}\, u_{x}(t_{x}) \\
         \vdots\end{bmatrix}$
\end{enumerate}
\end{lemma}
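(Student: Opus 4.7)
The plan is to exploit the fact that the objective $z^{T} u(t) = \sum_{x} z_x u_x(t_x)$ is \emph{fully separable} in $t$: each summand depends on only a single coordinate $t_x$, and the feasible set $\mathcal{T}^{D}$ is a product set. This lets me interchange $\max$ and $\sum$, decoupling the vector problem into $D$ independent scalar problems $\max_{t_x \in \mathcal{T}} z_x u_x(t_x)$, exactly as in the proof of Lemma~\ref{lemma-mask-opt}.

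For each scalar subproblem I would split on the sign of $z_x$. If $z_x > 0$, pulling out the positive scalar gives $\max_{t_x} z_x u_x(t_x) = |z_x|\max_{t_x} u_x(t_x) = |z_x|\max_{t_x}\sign(z_x) u_x(t_x)$. If $z_x < 0$, the negative scalar flips max to min, so $\max_{t_x} z_x u_x(t_x) = |z_x|\max_{t_x}(-u_x(t_x)) = |z_x|\max_{t_x}\sign(z_x) u_x(t_x)$. If $z_x = 0$ both sides are zero for every $t_x$. In all three cases the unified identity $\max_{t_x} z_x u_x(t_x) = |z_x|\max_{t_x}\sign(z_x) u_x(t_x)$ holds, and summing over $x$ yields (a).

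Part (b) follows from the same case split: when $z_x \neq 0$, multiplication by the positive constant $|z_x|$ preserves the argmax, so $\argmax_{t_x} z_x u_x(t_x) = \argmax_{t_x} \sign(z_x) u_x(t_x)$, and assembling these coordinatewise maximizers gives $\hat t$. For (c), substituting $\hat t_x$ back and using $\sign(z_x)^{2}=1$ for $z_x\neq 0$ gives $u_x(\hat t_x) = \sign(z_x)\bigl(\sign(z_x) u_x(\hat t_x)\bigr) = \sign(z_x)\max_{t_x}\sign(z_x) u_x(t_x)$; collecting these componentwise identities into a Hadamard product yields the vector form. Finally (d) is the specialization to $\sign(z_x)\in\{0,1\}$: when $z_x>0$ the subproblem reduces to $\max_{t_x} u_x(t_x)$, and when $z_x=0$ every $t_x$ is optimal, so in particular $\hat t_x = \argmax_{t_x} u_x(t_x)$ is a valid maximizer throughout, immediately producing the claimed formulas.

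The only real obstacle is a bookkeeping subtlety at coordinates with $z_x = 0$: there the scalar argmax is not uniquely determined, so (b) and (c) should be interpreted as asserting that \emph{some} maximizer satisfies the stated equality, not that every one does. This is the same degeneracy encountered in Lemma~\ref{lemma-mask-opt}(d) and is handled identically: fix any tiebreaking convention (for instance, declare $\argmax_{t_x}\sign(z_x) u_x(t_x)$ to return an arbitrary fixed element of $\mathcal{T}$ whenever $\sign(z_x)=0$), and all four equalities then hold verbatim. With this convention in place the proof is a direct coordinatewise verification and introduces no new ideas beyond those already used in Lemma~\ref{lemma-mask-opt}.
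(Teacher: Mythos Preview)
Your proposal is correct and follows essentially the same approach as the paper: both exploit the separability of $z^{T}u(t)=\sum_x z_x u_x(t_x)$ to decouple into independent scalar problems, use the identity $z_x=|z_x|\sign(z_x)$ (you via an explicit case split, the paper in one line) to derive (a) and (b), substitute back with $\sign(z_x)^2=1$ for (c), and handle the $z_x=0$ degeneracy for (d) exactly as you describe. Your remark about the tiebreaking convention at $z_x=0$ for parts (b)--(c) is a welcome clarification that the paper leaves implicit.
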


\begin{proof}
\textit{(a)} The maximum value can be computed as
\begin{align*}
    v^{\star} &\equiv \max_{\{t_{x} \in \mathcal{T}\}^D_{x=1}} \sum_{x} z_{x}u_{x}(t_{x})\\
    &= \sum_{x} \max_{t_{x} \in \mathcal{T}} z_{x}u_{x}(t_{x})\\
    &= \sum_{x} \max_{t_{x} \in \mathcal{T}} |z_{x}|\sign(z_{x})u_{x}(t_{x})\\
    &= \underset{x}{\sum}|z_{x}|\underset{t_{x} \in \mathcal{T}}{\max}\sign(z_{x})u_{x}(t_{x})
\end{align*}
\textit{(b)} In the 2nd line the vector optimization decouples into a set of independent scalar optimizations  $\max_{t_{x} \in \mathcal{T}} z_{x}u_{x}(t_{x})$, each of which has the solution as follows: $\underset{t_{x}}{\amax}\, \sign(z_{x})u_{x}(t_{x})$. Hence, the optimal solution $\hat{t} = \underset{t \in \mathcal{T}^{D}}{\amax}\, z^{T}u(t)=\begin{bmatrix}\vdots \\
         \underset{t_{x}}{\amax}\, \sign(z_{x})u_{x}(t_{x}) \\
         \vdots\end{bmatrix}=\underset{t}{\amax}\, \sign(z) \odot u(t)$.\\
\textit{(c)} Substituting in $\hat{t}$, we obtain
\begin{align*}
v^{\star} &= \sum_{x} |z_{x}|\sign(z_{x})u_{x}(\hat{t}_{x})\\
&= \sum_{x} z_{x}\sign(z_{x})(\sign(z_{x})u_{x}(\hat{t}_{x}))\\
&= \sum_{x} z_{x}\sign(z_{x})\underset{t_{x}}{\max}\, \sign(z_{x})u_{x}(t_{x})\\
&= z^{T}\begin{bmatrix}\vdots \\
         \sign(z_{x})\underset{t_{x}}{\max}\, \sign(z_{x})u_{x}(t_{x}) \\
         \vdots\end{bmatrix}
\end{align*}
Hence,
\begin{align*}
u(\hat{t})=\begin{bmatrix}\vdots \\
         \sign(z_{x})\underset{t_{x}}{\max}\, \sign(z_{x})u_{x}(t_{x}) \\
         \vdots\end{bmatrix}=\sign(z) \odot \underset{t}{\max}\, \left(\sign(z) \odot u(t)\right),
\end{align*}
\textit{(d)} If $z \ge 0$, then when $z_{i} > 0$,\, $\hat{t}_{x} = \underset{t_{x}}{\amax}\, u_{x}(t_{x})$, and when $z_{i}=0$, $\hat{t}_{x}$ can take any value in its domain since then $\underset{t_{x} \in \mathcal{T}}{\max}\sign(z_{x})u_{x}(t_{x}) = 0 \, \forall t_{x} \in \mathcal{T}$. Therefore, if $z \ge 0$, $\hat{t}_{x} = \underset{t_{x}}{\amax}\, u_{x}(t_{x})$ is a solution of the optimization \ref{max-opt}. It follows that $u(\hat{t})=\underset{t}{\max}\, \left(u(t)\right) \equiv \begin{bmatrix}\vdots \\
         \underset{t_{x}}{\max}\, u_{x}(t_{x}) \\
         \vdots\end{bmatrix}$.
\end{proof}

\begin{definition}[\textbf{Deep Masking Operator}]
Let $a^{(\ell)} \in \{0,1\}^{D^{(\ell)}}$ be a collection of binary (vector) masks and let $\Lambda^{(\ell)} \in \R^{D^{(\ell-1)} \times D^{(\ell)}}$ be a collection of (matrix) operators. Then the \textbf{deep masking operator} $\Mask_{ \{ a^{(\ell)} \} }(\{ \Lambda^{(\ell)} \}) \in \R^{D^{(0)} \times D^{(L)}}$ is defined as
\[
    \Mask_{ \{ a^{(\ell)} \} }(\{ \Lambda^{(\ell)} \}) \equiv \prod_{\ell = 1}^{L} \Mask_{a^{(\ell)}}(\Lambda^{(\ell)}) = \prod_{\ell = 1}^{L} \Lambda^{(\ell)} \cdot M_{a^{(\ell)}},
\]
where $M_a \equiv \diag(a)$ is the diagonal masking matrix for mask $a$.
\end{definition}

\subsection{E-Step: Inference of Top-Level Category}
\begin{theorem}[\textbf{Inference in DRMM $\Rightarrow$ Signed Convnets}] Inference in the DRMM, according to the Dynamic Programming-based algorithm below, yields Signed DCNs. The inference algorithm has a bottom-up and top-down pass.
\end{theorem}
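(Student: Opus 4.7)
The plan is to prove the theorem by reducing the joint MAP inference problem for $(c^{(L)}, \{g^{(\ell)}\}_{\ell=1}^{L})$ in the DRMM to a recursive sequence of layer-wise optimizations whose closed-form solutions are exactly the signed convolution, signed ReLU, and signed max-pooling operations of a signed DCN. Concretely, I will start from the hard-responsibility limit (\ref{eqn:hard-no-latent-rs}) and write the MAP objective as
\begin{equation*}
    (\hat{c}, \hat{g}) = \argmax_{c, g} \; \mu_{c}^{T} \Bigl( \prod_{\ell=1}^{L} \Lambda^{(\ell)}_{g^{(\ell)}} \Bigr)^{T} I,
\end{equation*}
using the factorization $\mu_{cg} = \Lambda^{(1)}_{g^{(1)}} \cdots \Lambda^{(L)}_{g^{(L)}} \mu_{c}$ from Definition~\ref{defn:drmm}, up to additive constants absorbed into biases. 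Since each $\Lambda^{(\ell)}_{g^{(\ell)}} = T^{(\ell)}_{t^{(\ell)}} Z^{(\ell)} \Gamma^{(\ell)} M^{(\ell)}_{a^{(\ell)}}$ decomposes into a translation, a zero-padding, a filter bank, and a masking operator, the outer maxima over $g^{(\ell)} = (t^{(\ell)}, a^{(\ell)})$ can be peeled off one layer at a time using the max-product identity $\max\{ab, ac\} = a\max\{b,c\}$ for $a \ge 0$, just as in the fine-to-coarse recursion (\ref{eqn:f2c})--(\ref{eqn:req}) of the main text.

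The core step, carried out recursively for $\ell = 1, \ldots, L$, is to solve
\begin{equation*}
    I^{(\ell)} \equiv \max_{g^{(\ell)}} (\Lambda^{(\ell)}_{g^{(\ell)}})^{T} I^{(\ell-1)}
              = \max_{t^{(\ell)}} \max_{a^{(\ell)}} M^{(\ell)}_{a^{(\ell)}} (\Gamma^{(\ell)})^{T} (Z^{(\ell)})^{T} (T^{(\ell)}_{t^{(\ell)}})^{T} I^{(\ell-1)},
\end{equation*}
where the inner max over $a^{(\ell)}$ is exactly the masked-optimization problem solved by Lemma~\ref{lemma-mask-opt}. Because we cannot in general assume the upstream signal $z^{(\ell)}$ is nonnegative (unless we impose the NN-DRMM constraint of Definition~\ref{defn:nn-drmm}), part (c) of that lemma applies, and the optimal masked output is $\sign(z^{(\ell)}) \odot \relu(\sign(z^{(\ell)}) \odot u)$ rather than a plain $\relu$. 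This is precisely where the word ``Signed'' enters the statement: the activations carry the signs of the downstream template slice as pre- and post-multipliers. The outer max over $t^{(\ell)}$ is then solved by Lemma~\ref{lemma-vec-max}, whose part (c) gives a \emph{signed} max-pooling operation. Finally, the transpose $(Z^{(\ell)} \Gamma^{(\ell)})^{T}$ is interpreted as a (cross-)correlation with the filter bank --- a convolution by Lemma~\ref{lem:trans-to-dcn-conv} --- so that one recursion step becomes $I^{(\ell)} = \text{SignedMaxPool} (\text{SignedReLU}(\text{Conv}(I^{(\ell-1)})))$. Stacking these recursions yields the bottom-up pass of a signed DCN, followed by the final classification $\hat{c} = \argmax_{c} \mu_{c}^{T} I^{(L)}$.

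For the top-down pass, I will argue that the argmaxes $(\hat{t}^{(\ell)}, \hat{a}^{(\ell)})$ at each layer are not computed in isolation during the bottom-up sweep but can be recovered by backtracking once $\hat{c}$ is known, analogously to Viterbi decoding on a tree-structured graphical model. Starting from the inferred class template $\mu_{\hat{c}}$, I will apply the transformations $\Lambda^{(\ell)}_{\hat{g}^{(\ell)}}$ top-down, using the maximizers stored during the forward sweep (which in turn are determined by the signs and argmax indices identified by Lemmas~\ref{lemma-mask-opt} and \ref{lemma-vec-max}), to produce the reconstruction $\hat{I} = \Lambda_{\hat{g}} \mu_{\hat{c}}$ mentioned in the ``Top-Down Inference'' paragraph of Section~3.2.

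The main obstacle I anticipate is bookkeeping the signs correctly through the recursion: unlike the NN-DRMM case, where $z^{(\ell)} \ge 0$ forces the reductions in Lemma~\ref{lemma-mask-opt}(d) and Lemma~\ref{lemma-vec-max}(d) to yield the ordinary $\relu$ and $\maxpool$ of a standard DCN, the unconstrained DRMM requires using parts (c) of both lemmas. This means at each layer one must carry the sign pattern $\sign(z^{(\ell)})$ as an auxiliary quantity that modifies both the activation and the pooling step, producing the signed variants of the DCN primitives. Once this sign-tracking is set up cleanly, the rest of the proof is a straightforward induction on $\ell$, and the NN-DRMM $\Rightarrow$ standard DCN statement from the main text falls out as the special case where all intermediate $z^{(\ell)}$ are nonnegative and the signs drop out.
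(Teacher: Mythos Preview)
Your proposal is correct and follows essentially the same approach as the paper: you reduce the scalar MAP objective $\mu_c^T(\prod_\ell \Lambda_{g^{(\ell)}})^T I$ to a layer-wise recursion, invoke Lemma~\ref{lemma-mask-opt}(c) for the max over $a^{(\ell)}$ and Lemma~\ref{lemma-vec-max}(c) for the max over $t^{(\ell)}$ to obtain the signed ReLU/max-pool primitives, and then recover the $(\hat{t}^{(\ell)},\hat{a}^{(\ell)})$ by a top-down traceback. The one point the paper makes more explicit is that the sign pattern $s^{(\ell)\downarrow}=\sign(z^{(\ell)\downarrow})$ is a \emph{top-down} quantity (it depends on $c$ and on $g^{(\ell+1)},\dots,g^{(L)}$, not yet optimized), so the bottom-up activations $I_n^{(\ell)}$ must be indexed by $s^{(\ell)\downarrow}$ and the top-down pass selects the correct sign pattern once $\hat{c}$ and the coarser $\hat{g}$'s are known---your phrase ``carry the sign pattern as an auxiliary quantity'' should be read in exactly this sense.
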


\begin{proof}
Given input image $I_n \equiv I_n^{(0)}$, we infer $\hat{c}_{n}$ as follows:

\begin{align*}
\hat{c}_{n} &= \underset{c}{\amax}\max_{g} -\frac{1}{2} \| I_{n} - \mu_{cg}\|_2^2 \\
&= \underset{c}{\amax}\max_{g}  \mu_{cg}^{T} I_{n} -\frac{1}{2}\|I_{n}\|_{2}^{2} -\frac{1}{2}\|\mu_{cg}\|_{2}^{2} \\
&= \underset{c}{\amax}\max_{g}  \mu_{cg}^{T} I_{n} - \frac{1}{2}\|\mu_{cg}\|_{2}^{2},
\end{align*}
where the last equality follow since $I_n$ is independent of $c,g$. We further assume that: 
\begin{align*}
    \alpha_{g^{(\ell)}} &= 0 \,\, \forall \ell \\
    \| \mu_{cg} \|_2^2 &= \const \,\, \forall c,g.
\end{align*}
As a result, $\mu_{cg} = \Lambda_{g}\mu_{c}$ and the most probable class $\hat{c}_{n}$ is inferred as
\begin{align}
\hat{c}_{n} &= \underset{c}{\amax}\max_{g}  \mu_{cg}^{T} I^{(0)}_{n} \\
&= \underset{c}{\amax}\max_{g}  (\Lambda_{g}\mu_{c} )^{T}I^{(0)}_{n}\\
&= \underset{c}{\amax}\max_{g^{(L:1)}} \mu_{c}^{T} \Lambda_{g^{(L)}}^{T} \cdots \Lambda_{g^{(2)}}^{T}\Lambda_{g^{(1)}}^{T}I^{(0)}_{n} \label{deep-dp-L} \\
&= \underset{c}{\amax}\max_{g^{(L:2)}} \max_{t^{(1)}}\max_{a^{(1)}}\mu_{c}^{T} \Lambda_{g^{(L)}}^{T} \cdots \Lambda_{g^{(2)}}^{T}(M_{a^{(1)}}\Lambda^{T}_{t^{(1)}})I^{(0)}_{n} \\
&= \underset{c}{\amax}\max_{g^{(L:2)}} \max_{t^{(1)}}\max_{a^{(1)}}\underbrace{\left(\mu_{c}^{T} \Lambda_{g^{(L)}}^{T} \cdots \Lambda_{g^{(2)}}^{T}\right)}_{\equiv z^{(1)\downarrow T}}M_{a^{(1)}}\underbrace{\left(\Lambda^{T}_{t^{(1)}}I^{(0)}_{n}\right)}_{\equiv u_{n}^{(1)\uparrow}(t^{(1)})}\\
&= \underset{c}{\amax}\max_{g^{(L:2)}} \max_{t^{(1)}}\max_{a^{(1)}} z^{(1)\downarrow T}M_{a^{(1)}}u_{n}^{(1)\uparrow}(t^{(1)})\\
&\overset{(a)}{=} \underset{c}{\amax}\max_{g^{(L:2)}} \max_{t^{(1)}} z^{(1)\downarrow T}M_{\hat{a}_{n}^{(1)}}u_{n}^{(1)\uparrow}(t^{(1)})\\
&\overset{(b)}{=} \underset{c}{\amax}\max_{g^{(L:2)}} z^{(1)\downarrow T} \left(s^{(1)\downarrow} \odot \max_{t^{(1)}}s^{(1)\downarrow}\odot \left( M_{\hat{a}_{n}^{(1)}}u_{n}^{(1)\uparrow}(t^{(1)}) \right)\right)\\
&\overset{(c)}{=} \underset{c}{\amax}\max_{g^{(L:2)}} z^{(1)\downarrow T} \left(s^{(1)\downarrow} \odot \max_{t^{(1)}}s^{(1)\downarrow}\odot \left( s^{(1)\downarrow} \odot \relu\left(s^{(1)\downarrow} \odot u_{n}^{(1)\uparrow}(t^{(1)})\right) \right)\right)\\
&\overset{(d)}{=} \underset{c}{\amax}\max_{g^{(L:2)}}  z^{(1)\downarrow T}\underbrace{\left(s^{(1)\downarrow} \odot \maxpool\left( \relu\left(\textrm{diag}(s^{(1)\downarrow}) u_{n}^{(1)\uparrow}(\mathcal{T})\right)\right)\right)}_{\equiv I_{n}^{(1)}(s^{(1) \downarrow})}\\
&= \underset{c}{\amax}\max_{g^{(L:2)}}\mu_{c}^{T} \Lambda_{g^{(L)}}^{T} \cdots \Lambda_{g^{(2)}}^{T}I_{n}^{(1)} \label{deep-dp-Lminus1}
\end{align}

In line (a), we employ Lemma \ref{lemma-mask-opt}\textit{(b)} to infer the optimal $\hat{a}_{n}^{(1)}$. In line (b) and (c), we employ \ref{lemma-vec-max}\textit{(c)} and Lemma \ref{lemma-mask-opt}\textit{(c)} to calculate the max-product message $I_{n}^{(1)}$ to be sent to the next layer. Notice that here $s^{(1)\downarrow}=\sign\left(z^{(1)\downarrow}\right)$. In line (b), $\hat{t}^{(1)}_{n}$ is implicitly inferred via Lemma \ref{lemma-vec-max}\textit{(b)}. In line (d), $s^{(1) \downarrow} \odot s^{(1) \downarrow}$ becomes a vector of all $1$'s. Also, in the same line, $\textrm{diag}(s^{(1)\downarrow})$ is a diagonal matrix with diagonal $s^{(1)\downarrow}$  and $u_{n}^{(1)\uparrow}(\mathcal{T})$ is a matrix $[ u_{n x t} ]$ where rows are indexed by $x \in \mathcal{X}$ and columns by $t \in \mathcal{T}$. It corresponds to the output of the convolutional layer in a DCN, prior to the ReLu and spatial max-pooling operators.


Note that we have succeeded in expressing the optimization (Eq. \ref{deep-dp-L}) recursively in terms of a one level smaller sub-problem (Eq. \ref{deep-dp-Lminus1}). Iterating this procedure yields a set of recurrence relations, which define our \textit{Dynamic Programming (DP)} algorithm for the bottom-up and top-down inference in the DRMM:

\textbf{Bottom-Up E-Step ($\mathbf{E}_{\uparrow}$):}
\begin{align}
u_{n}^{(\ell)\uparrow} &= \Lambda^{T}_{t^{(\ell)}}I^{(\ell -1)}_{n} \\
s^{(\ell)\downarrow}&=\sign\left(z^{(\ell)\downarrow}\right)\\
\forall s^{(\ell)\downarrow} \in \{\pm 1\}^{D^{(\ell)}}: \hat{a}_{n}^{(\ell) \updownarrow}(s^{(\ell) \downarrow}) &= \iver{s^{(\ell) \downarrow} \odot u_{n}^{(\ell)\uparrow} > 0} \label{eqn:bottom-up-ahat} \\
\forall s^{(\ell)\downarrow} \in \{\pm 1\}^{D^{(\ell)}}:\hat{t}_{n}^{(\ell)\updownarrow}(s^{(\ell) \downarrow}) &=\underset{t^{(\ell)}}{\amax}\, s^{(\ell)\downarrow} \odot u_{n}^{(\ell)\uparrow}(t^{(\ell)})\\
 I_{n}^{(\ell)}(s^{(\ell)\downarrow}) 
  &=    M_{\hat{a}_{n}^{(\ell)}}\left(\Lambda^{T}_{\hat{t}^{(\ell)}}I_{n}^{(\ell-1)}\right)\\
  &= s^{(\ell)\downarrow} \odot \maxpool\left( \relu\left(\textrm{diag}(s^{(1)\downarrow}) u_{n}^{(1)\uparrow}(\mathcal{T})\right)\right)\label{eqn:bottom-up-acts}\\
\hat{c}^{(L)}_{n} &= \underset{c^{(L)}}{\amax} \mu_{c^{(L)}}^{T} I^{(L)}_{n}\label{eqn:chat}
\end{align}

\textbf{Top-Down/Traceback E-Step ($\mathbf{E}_{\uparrow}$):}
\begin{align}
\hat{z}_{n}^{(\ell)\downarrow} &= \Lambda_{\hat{g}^{(\ell + 1)}_n} \cdots \Lambda_{\hat{g}^{(L)}_{n}} \mu_{\hat{c}^{(L)}_{n}} \\
    &= \Lambda_{\hat{g}^{(\ell + 1)}_n} \hat{z}_{n}^{(\ell+1)\downarrow} \\
\hat{s}^{(\ell)\downarrow}_{n} &=\sign(\hat{z}_{n}^{(\ell)\downarrow})\\
\hat{a}_{n}^{(\ell)\updownarrow} &= \hat{a}_{n}^{(\ell )\updownarrow}(\hat{s}_{n}^{(\ell) \downarrow}) = \iver{\hat{s}_{n}^{(\ell)\downarrow} \odot u_{n}^{(\ell)\uparrow} > 0} \label{eqn:top-down-ahat}\\
\hat{t}_{n}^{(\ell)\updownarrow}&=\hat{t}_{n}^{(\ell)\updownarrow}(s_{n}^{(\ell) \downarrow}) =\underset{t^{(\ell)}}{\amax}\, s_{n}^{(\ell)\downarrow} \odot u_{n}^{(\ell)\uparrow}(t^{(\ell)})\label{eqn:top-down-that}
\end{align}

where $u_{n}^{(\ell)\uparrow}$ and $\hat{z}_{n}^{(\ell)\downarrow}$ are the bottom-up and top-down net inputs into layer $\ell$, respectively.
\end{proof}

\begin{corollary}[\textbf{Inference in the NN-DRMM $\Rightarrow$ Convnets}] Inference in the NN-DRMM according to the Dynamic Programming-based algorithm above yields ReLu DCNs.
\end{corollary}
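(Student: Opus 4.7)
The plan is to specialize the Dynamic Programming inference algorithm proved in the preceding theorem to the NN-DRMM setting, where the key simplification is that the top-down sign vectors $s^{(\ell)\downarrow}$ all become trivial. By Definition \ref{defn:nn-drmm}, the intermediate rendered templates of the NN-DRMM satisfy $\hat{z}_n^{(\ell)\downarrow} \geq 0$ for every $\ell$, so $\hat{s}_n^{(\ell)\downarrow} = \sign(\hat{z}_n^{(\ell)\downarrow})$ is a vector of ones (with an arbitrary choice of sign on the zero-entries, which we fix to $+1$ without loss, since those coordinates do not contribute to any inner product).

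First, I would invoke the nonnegativity branches of the two key lemmas at each layer. Part (d) of Lemma \ref{lemma-mask-opt} applied to $z = \hat{z}_n^{(\ell)\downarrow} \geq 0$ immediately gives $\hat{a}_n^{(\ell)} = \iver{u_n^{(\ell)\uparrow} > 0}$ and $M_{\hat{a}_n^{(\ell)}} u_n^{(\ell)\uparrow} = \relu(u_n^{(\ell)\uparrow})$, removing the signed prefactor $\sign(z)$ in Lemma \ref{lemma-mask-opt}(c). In parallel, part (d) of Lemma \ref{lemma-vec-max} applied to the same $z$ gives $\hat{t}_n^{(\ell)} = \argmax_{t^{(\ell)}} u_n^{(\ell)\uparrow}(t^{(\ell)})$ and the resulting message becomes an unsigned spatial $\maxpool$ over $\mathcal{T}$.

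Second, I would substitute these specializations into the recurrence (\ref{eqn:bottom-up-acts}). Because $s^{(\ell)\downarrow} \odot (\cdot)$ and $\diag(s^{(\ell)\downarrow})$ both act as the identity when $s^{(\ell)\downarrow} = \mathbf{1}$, the message update collapses to
\begin{equation*}
I_n^{(\ell)} \;=\; \maxpool\bigl(\relu(u_n^{(\ell)\uparrow}(\mathcal{T}))\bigr) \;=\; \maxpool\bigl(\relu(\mathrm{Conv}(I_n^{(\ell-1)}))\bigr),
\end{equation*}
using that $u_n^{(\ell)\uparrow}(t^{(\ell)}) = \Lambda_{t^{(\ell)}}^{T} I_n^{(\ell-1)}$ is the standard convolution output of layer $\ell$ by Lemma \ref{lem:trans-to-dcn-conv}. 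Iterating from $\ell = 1$ up to $\ell = L$ and then applying the top-level decision rule (\ref{eqn:chat}) exhibits the bottom-up NN-DRMM inference pass as exactly the forward pass of an $L$-layer ReLu+MaxPool DCN classifier. The top-down E-step equations (\ref{eqn:top-down-ahat})--(\ref{eqn:top-down-that}) similarly simplify by the same substitutions and recover the latent pose and switch assignments consistent with a standard DCN activation pattern.

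The main thing to handle carefully is just the ambiguity at coordinates where $\hat{z}_n^{(\ell)\downarrow}$ vanishes: there the lemmas only guarantee that any choice of $\hat{a}$ or $\hat{t}$ is optimal, so one must check that fixing $\hat{s}_n^{(\ell)\downarrow} = +1$ on those entries both (i) preserves the optimality guaranteed by the theorem and (ii) reproduces the exact DCN formula rather than a signed variant. Both follow because a zero entry of $\hat{z}_n^{(\ell)\downarrow}$ contributes zero to every inner product in which it appears downstream, so the choice is inert. Beyond this observation the argument is a direct substitution into the DP derived in the theorem, and no new calculation is required.
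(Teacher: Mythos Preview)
Your proposal is correct and follows essentially the same approach as the paper: invoke the nonnegativity of $z^{(\ell)}$ from Definition~\ref{defn:nn-drmm}, apply parts~(d) of Lemmas~\ref{lemma-mask-opt} and~\ref{lemma-vec-max} to collapse the signed updates in~(\ref{eqn:bottom-up-acts}), (\ref{eqn:top-down-ahat}), (\ref{eqn:top-down-that}) to the unsigned $\maxpool\!\circ\!\relu$ form, and observe that the top-down quantities no longer depend on deeper layers. Your treatment of the zero-coordinate ambiguity is actually more explicit than the paper's, which simply notes $s^{(\ell)}_n \geq 0$ and defers to the lemmas.
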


\begin{proof}
The NN-DRMM assumes that the intermediate rendered latent variables $z^{(\ell)}_{n} \ge 0$ for all $\ell$, which implies that the signs are also nonnegative i.e., $s^{(\ell)}_{n} \ge 0$. This in turn, according to Lemma \ref{lemma-mask-opt}\textit{(d)} and \ref{lemma-vec-max}\textit{(d)}, reduces Eqs.\ \ref{eqn:bottom-up-acts}, \ref{eqn:chat}, \ref{eqn:top-down-ahat} and \ref{eqn:top-down-that} to
\begin{align}
    \mathbf{\textrm{E}_{\uparrow}:}\,\, I_{n}^{(\ell)} &= \maxpool \relu \left( u_{n}^{(\ell) \uparrow} \right) \\
    \hat{c}^{(L)}_{n} &= \underset{c^{(L)}}{\amax} \mu_{c^{(L)}}^{T} I^{(L)}_{n} \\
    \mathbf{\textrm{E}_{\downarrow}:}\,\, \hat{a}_{n}^{(\ell)} &= \iver{ u_{n}^{(\ell) \uparrow} > 0 } \\
    \hat{t}_{n}^{(\ell)} &= \underset{{t^{(\ell)}}}\amax{ u_{n}^{(\ell) \uparrow}(t^{(\ell)})},
\end{align}
which is equivalent to feedforward propagation in a DCN. Note that the the top-down step no longer requires information from the deeper levels, and so it can be computed in the bottom-up step instead.
\end{proof}

\textbf{Remark:} Note that the vector max notation $\underset{t}{\max}\,u(t)=\begin{bmatrix}\vdots \\
         \underset{t_{x}}{\max}\, u_{x}(t_{x}) \\
         \vdots\end{bmatrix}$ is the same as the max notation we use in our arXiv post. It refers to the \textbf{row} max-marginals of the matrix $u(t) \equiv [u_{xt}]_{x \in \mathcal{X}, t \in \mathcal{T}}$ with respect to latent variables $t$.

\section{Rendering Factor Model (RFM) Architecture}\label{sec:rfm-nn}

\begin{figure}[t]
    \centering
    \includegraphics[width=1.0\textwidth]{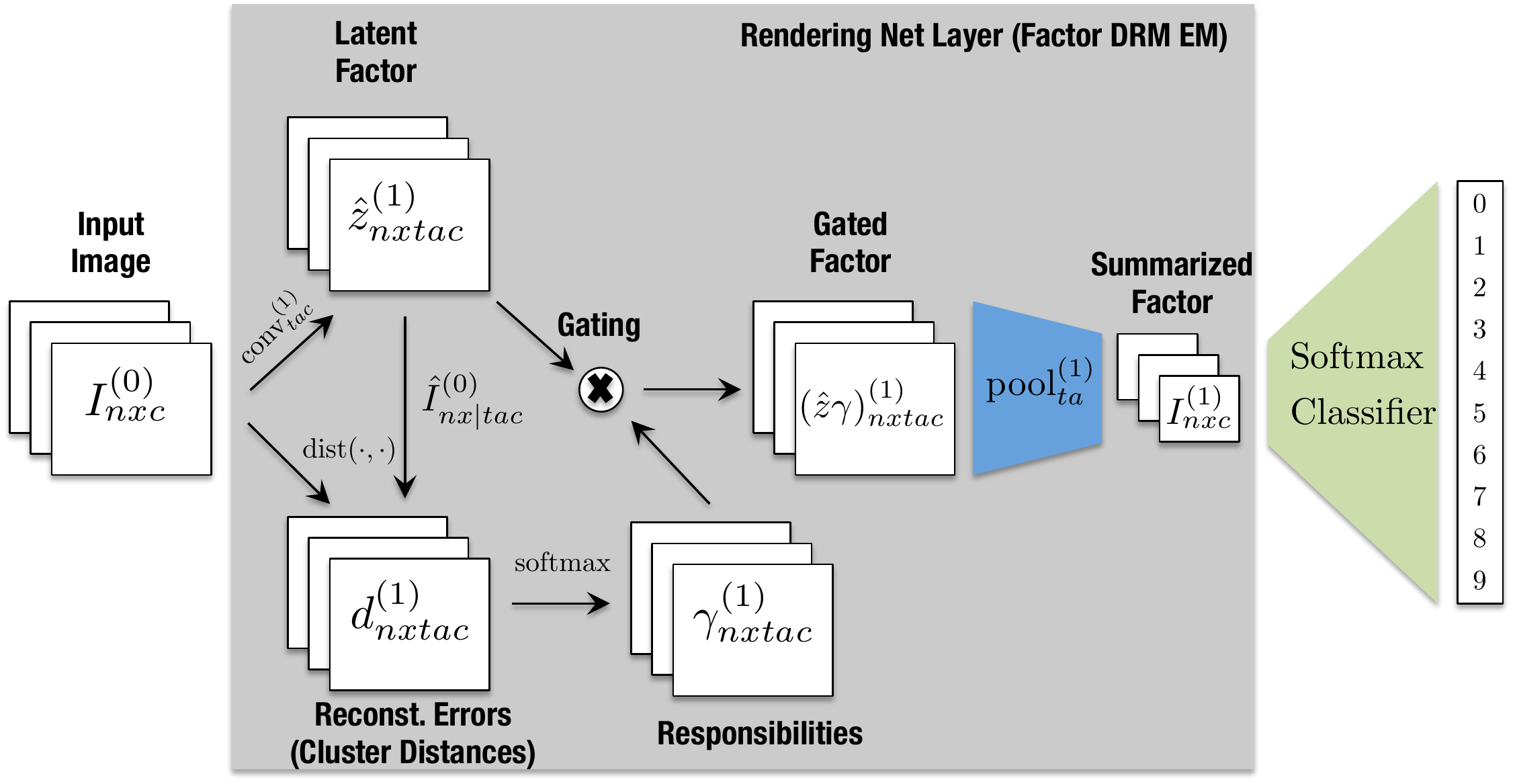} 
    \caption{Neural network implementation of shallow Rendering Model EM algorithm.}
    \vspace*{-3mm}
    \label{fig:RFM_NN}
\end{figure}

\section{Transforming a Generative Classifier into a Discriminative One} \label{sec:discr-relax}
Before we formally define the procedure, some preliminary definitions and remarks will be helpful. A generative classifier models the joint distribution $p(c,I)$ of the input features \emph{and} the class labels. It can then classify inputs by using Bayes Rule to calculate $p(c | I) \propto p(c,I) = p(I|c)p(c)$ and picking the most likely label $c$. Training such a classifier is known as \emph{generative learning}, since one can generate synthetic features $I$ by sampling the joint distribution $p(c,I)$. Therefore, a generative classifier learns an \emph{indirect} map from input features $I$ to labels $c$ by modeling the joint distribution $p(c,I)$ of the labels \emph{and} the features.

In contrast, a discriminative classifier parametrically models $p(c|I) = p(c| I;\theta_{d})$ and then trains on a dataset of input-output pairs $\{ (I_{n},c_{n}) \}_{n=1}^{N}$ in order to estimate the parameter $\theta_{d}$. This is known as \emph{discriminative learning}, since we directly discriminate between different labels $c$ given an input feature $I$. Therefore, a discriminative classifier learns a direct map from input features $I$ to labels $c$ by \emph{directly} modeling the conditional distribution $p(c| I)$ of the labels \emph{given} the features.

Given these definitions, we can now define the \emph{discriminative relaxation} procedure for converting a generative classifier into a discriminative one. Starting with the standard learning objective for a generative classifier, we will employ a series of transformations and relaxations to obtain the learning objective for a discriminative classifier. Mathematically, we have
\begin{align} 
	\max_{\theta} L_{\rm gen}(\theta) 
	&\equiv \max_{\theta} \sum_{n} \ln p(c_{n}, I_{n} | \theta)
	\nonumber\\ 
	&\overset{(a)}{=} \max_{\theta} \sum_{n} \ln p(c_{n} | I_{n}, \theta) + \ln p(I_{n} | \theta)
	\nonumber\\ 
	&\overset{(b)}{=} \max_{\theta, \tilde{\theta}: \theta = \tilde{\theta} } \sum_{n} \ln p(c_{n} | I_{n}, \theta) + \ln p(I_{n} | \tilde{\theta})
	\nonumber\\ 
	&\overset{(c)}{\leq} \max_{\theta} \underbrace{\sum_{n} \ln p(c_{n} | I_{n}, \theta)}_{\equiv L_{\rm cond}(\theta)}
	\nonumber\\ 
	&\overset{(d)}{=} \max_{\eta : \eta = \rho(\theta)} \sum_{n} \ln p(c_{n} | I_{n}, \eta)
	\nonumber\\ 
	&\overset{(e)}{\leq} \max_{\eta} \underbrace{\sum_{n} \ln p(c_{n} | I_{n}, \eta)}_{\equiv L_{\rm dis}(\eta)},
	\label{eqn:like-gen-cond-discr}
\end{align}
where the $L$'s are the \emph{generative}, \emph{conditional} and \emph{discriminative} log-likelihoods, respectively. In line (a), we used the Chain Rule of Probability. In line (b), we introduced an extra set of parameters $\tilde{\theta}$ while also introducing a constraint that enforces equality with the old set of generative parameters $\theta$. In line (c), we relax the equality constraint (first introduced by Bishop, LaSerre and Minka in \cite{bishop2007generative}), allowing the classifier parameters $\theta$ to differ from the image generation parameters $\tilde{\theta}$. In line (d), we pass to the \emph{natural parametrization} of the exponential family distribution $I | c$, where the natural parameters $\eta = \rho(\theta)$ are a fixed function of the conventional parameters $\theta$. This constraint on the natural parameters ensures that optimization of $L_{\rm cond}(\eta)$ yields the same answer as optimization of $L_{\rm cond}(\theta)$. And finally, in line (e) we relax the natural parameter constraint to get the learning objective for a discriminative classifier, where the parameters $\eta$ are now free to be optimized. A graphical model depiction of this process is shown in Fig. \ref{fig:discr-relax-RM}.

In summary, starting with a generative classifier with learning objective $L_{\rm gen}(\theta)$, we complete steps (a) through (e) to arrive at a discriminative classifier with learning objective $L_{\rm dis}(\eta)$. We refer to this process as a \emph{discriminative relaxation of a generative classifier} and the resulting classifier is a \emph{discriminative counterpart to the generative classifier}. 

\begin{figure}[t]
   \centering
   \includegraphics[width=0.80\linewidth]{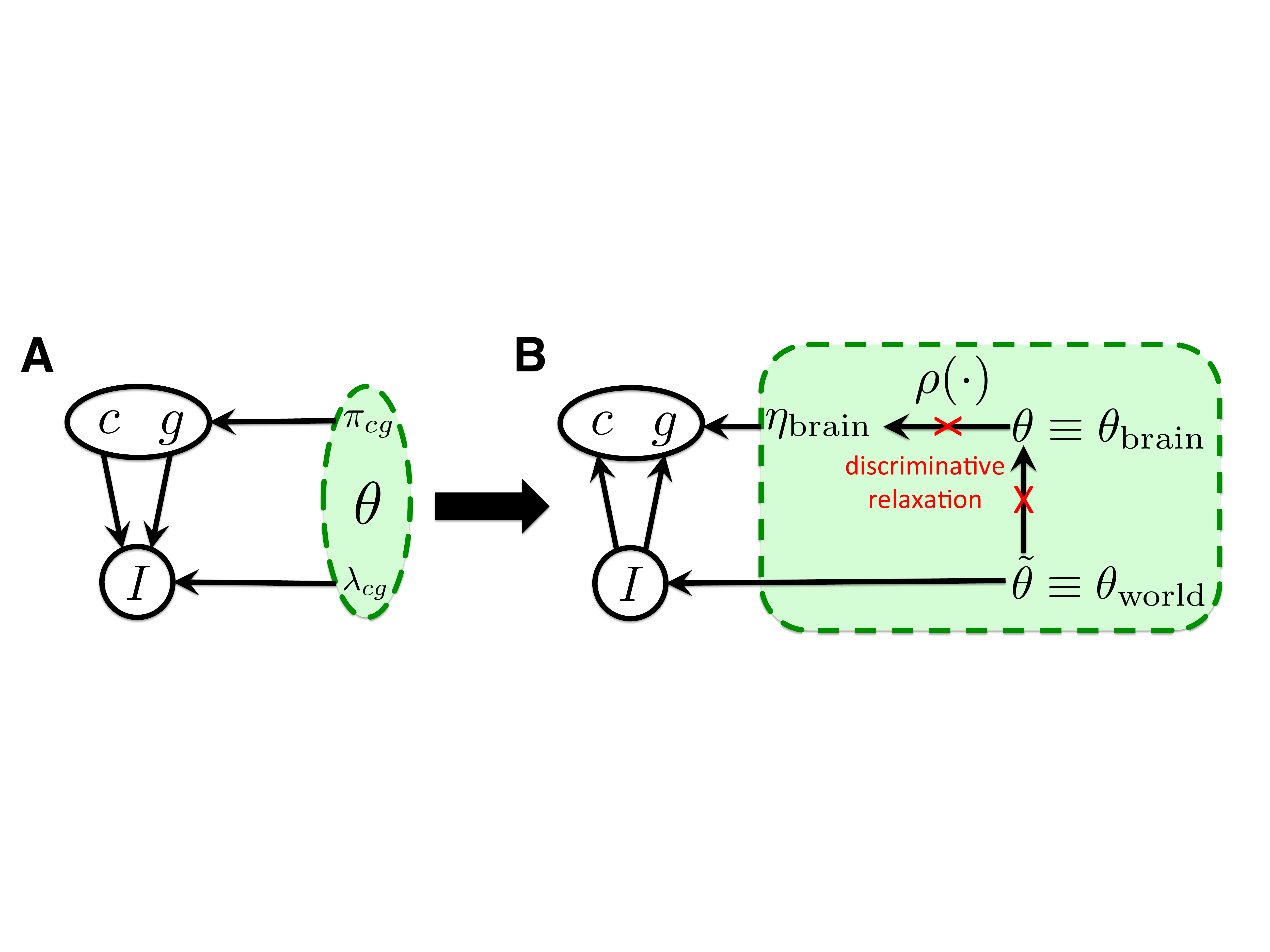} 
   \vspace{1mm}
   \caption{Graphical depiction of discriminative relaxation procedure. (A) The Rendering Model (RM) is depicted graphically, with mixing probability parameters $\pi_{cg}$ and rendered template parameters $\lambda_{cg}$. Intuitively, we can interpret the discriminative relaxation as a {\em brain-world transformation} applied to a generative model. According to this interpretation, instead of the world generating images and class labels (A), we instead imagine the world generating images $I_{n}$ via the rendering parameters $\tilde{\theta} \equiv \theta_{\rm world}$ while the brain generates labels $c_{n},g_{n}$ via the classifier parameters $\eta_{\rm dis} \equiv \eta_{\rm brain}$ (B). The brain-world transformation converts the RM (A) to an equivalent graphical model (B), where an extra set of parameters $\tilde{\theta}$ and constraints (arrows from $\theta$ to $\tilde{\theta}$ to $\eta$) have been introduced. Discriminatively relaxing these constraints (B, red X's) yields the single-layer DCN as the discriminative counterpart to the original generative RM classifier in (A).}
   \label{fig:discr-relax-RM}
\end{figure}

\section{Derivation of Closed-Form Expression for  Activity-Maximizing Images}
\label{supp:actmax-proof}

Results of running activity maximization are shown in Fig.~\ref{fig:actmax} for completeness. Mathematically, we seek the image $I$ that maximizes the score $S(c|I)$ of a specific object class. Using the DRM, we have
\begin{align} 
	 \max_{I} S(c^{(L)} | I) &=\max_{I} \max_{g\in \G} \: \langle \frac{1}{\sigma^{2}}\mu(c^{(L)},g^{(\ell)}) | I \rangle \nonumber\\
	 		         &\propto \max_{g\in \G} \max_{I} \: \langle \mu(c^{(L)},g) | I \rangle 
			         \nonumber\\
				&= \max_{g\in \G} \max_{I_{\Pa_{1}}} \cdots \max_{I_{\Pa_{p}}} \: \langle \mu(c^{(L)},g) | \sum_{\Pa_{i} \in \Pa} I_{\Pa_{i}} \rangle
				\nonumber\\
				&= \max_{g\in \G} \: \sum_{\Pa_{i} \in \Pa} \max_{I_{\Pa_{i}}} \: \langle \mu(c^{(L)},g) |  I_{\Pa_{i}} \rangle 
				\nonumber\\
				&= \max_{g\in \G} \: \sum_{\Pa_{i} \in \Pa} \: \langle \mu(c^{(L)},g) |  I^{*}_{\Pa_{i}}(c^{(L)},g) \rangle 
				\nonumber\\
				&= \sum_{\Pa_{i} \in \Pa} \: \langle \mu(c^{(L)},g) |  I^{*}_{\Pa_{i}}(c^{(L)},g^{*}_{\Pa_{i}} \rangle ,
\end{align}

where $I^{*}_{\Pa_{i}}(c^{(\ell)},g) \equiv \argmax_{I_{\Pa_{i}}} \: \langle \mu(c^{(\ell)},g) | I_{\Pa_{i}} \rangle$ and $g^{*}_{\Pa_{i}}=g^{*}(c^{(\ell)},\Pa_{i}) \equiv \argmax_{g \in \G} \: \langle \mu(c^{(\ell)},g) | I^{*}_{\Pa_{i}}(c^{(\ell)},g)  \rangle$. In the third line, the image $I$ is decomposed into $P$ patches $I_{\Pa_{i}}$ of the same size as $I$, with all pixels outside of the patch $\Pa_{i}$ set to zero. The $\max_{g \in \G}$ operator finds the most probable $g^{*}_{\Pa_{i}}$ within each patch. The solution $I^{*}$ of the activity maximization is then the sum of the individual activity-maximizing patches
\begin{align} 
	I^{*} \equiv \sum_{\Pa_{i} \in \Pa} I^{*}_{\Pa_{i}}(c^{(\ell)},g^{*}_{\Pa_{i}}) \propto \sum_{\Pa_{i} \in \Pa} \mu(c^{(\ell)},g^{*}_{\Pa_{i}}).  \label{eqn:actmax2}			
\end{align}

\begin{figure}[t]
   \centering
   \includegraphics[height=0.35\linewidth]{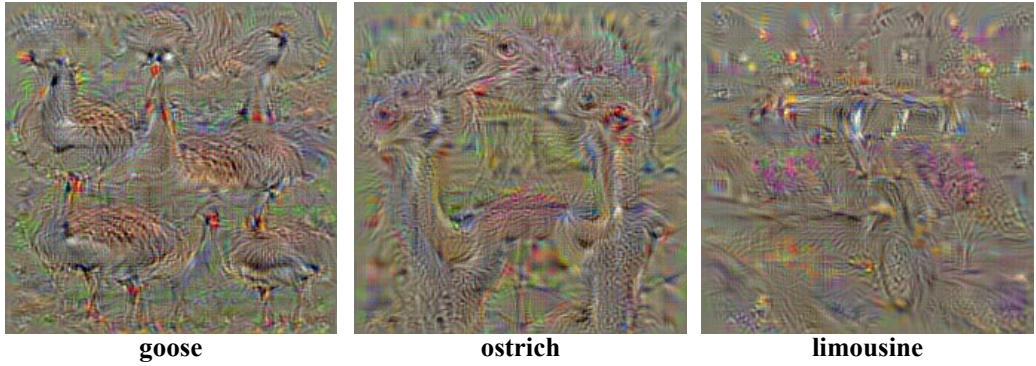} 
   \caption{Results of activity maximization on the ImageNet dataset \cite{simonyan2013deep}.
   For a given class $c$, activity-maximizing inputs are superpositions of various poses of the object, with distinct patches $\Pa_{i}$ containing distinct poses $g^{*}_{\Pa_{i}}$, as predicted by Eq.~\ref{eqn:actmax2}. Figure adapted with permission from the authors.}
   \label{fig:actmax}
\end{figure}

\section{From the DRMM to Decision Trees} 
\label{sec:rdf}


In this section we show that, like DCNs, Random Decision Forests (RDFs) can also be derived from the DRMM model. Instead of translational and switching nuisances, we will show that an {\em additive mutation nuisance process} that generates a hierarchy of categories (e.g., evolution of a taxonomy of living organisms) is at the heart of the RDF. 

%
\subsection{The Evolutionary Deep Rendering Mixture Model}

We define the \emph{Evolutionary DRMM} (E-DRMM) as a DRMM with an evolutionary tree of categories. Samples from the model are generated by starting from the root ancestor template and randomly mutating the templates. Each child template is an additive ``mutation'' of its parent, where the specific mutation does not depend on the parent (see Eq.\ref{eqn:incDRM-RDF} below). At the leaves of the tree, a sample is generated by adding Gaussian pixel noise.  Like in the DRMM, given $c^{(L)} \sim{\rm Cat}(\pi_{c^{(L)}})$ and $g^{(\ell+1)} \sim{\rm Cat}(\pi_{g^{(\ell+1)}})$, with $c^{(L)} \in \Cl^{L}$ and $g^{(\ell+1)} \in \G^{\ell+1}$ where $\ell = 1, 2, \cdots , L$, the template $\mu_{c^{(L)}g}$ and the image $I$ are rendered as
\begin{align} 
	\mu_{c^{(L)} g} &=  \Lambda_{g} \mu_{c^{(L)}} \equiv \Lambda_{g^{(1)}} \cdots \Lambda_{g^{(L)}} \cdot \mu_{c^{(L)}}
	\nonumber \\
	&\equiv \mu_{c^{(L)}} + \alpha_{g^{(L)}} + \cdots + \alpha_{g^{(1)}}, \quad g=\{g^{(\ell)}\}_{\ell=1}^{L}
	\nonumber \\
	I &\sim \mathcal{N}(\mu_{c^{(L)}g}, \sigma^{2} 1_{D}) \in \R^{D}. \nonumber
\end{align}

Here, $\Lambda_{g^{(\ell)}}$ has a special structure due to the additive mutation process: $\Lambda_{g^{(\ell)}} = [ {\bf 1} \, |\, \alpha_{g^{(\ell)}} ]$, where ${\bf 1}$ is the identity matrix. The rendering path represents template evolution and is defined as the sequence $(c^{(L)},g^{(L)},\ldots,g^{(\ell)},\ldots,g^{(1)})$ from the root ancestor template down to the individual pixels at $\ell=0$.  $\mu_{c^{(L)}}$ is an abstract template for the root ancestor $c^{(L)}$, and $\sum_{\ell} \alpha_{g^{(\ell)}}$ represents the sequence of local nuisance transformations, in this case, the accumulation of many additive mutations. 

As with the DRMM, we can cast the E-DRMM into an incremental form by defining an intermediate class $c^{(\ell)} \equiv (c^{(L)}, g^{(L)}, \ldots, g^{(\ell+1)})$ that intuitively represents a partial evolutionary path up to level $\ell$. Then, the mutation from level $\ell+1$ to $\ell$ can be written as
\begin{align} 
   \mu_{c^{(\ell)}} = \Lambda_{g^{(\ell+1)}} \cdot \mu_{c^{(\ell+1)}} = \mu_{c^{(\ell+1)}} + \alpha_{g^{(\ell + 1)}}. 
   \label{eqn:incDRM-RDF}
\end{align}
Here, $\alpha_{g^{(\ell)}}$ is the mutation added to the template at level $\ell$ in the evolutionary tree.

\subsection{Inference with the E-DRM Yields a Decision Tree}

Since the E-DRMM is an RMM with a hierarchical prior on the rendered templates, we can use Eq.\ref{eqn:cnn1} to derive the E-DRMM inference algorithm for $\hat{c}^{(L)}(I)$ as:
\begin{align} 
\hat{c}^{(L)}(I) &= \argmax_{c^{(L)} \in \Cl^{L}} \max_{g \in \G} \: \langle \mu_{c^{(L)}} + \alpha_{g^{(L)}} + \cdots + \alpha_{g^{(1)}} | I  \rangle \nonumber\\
		                        &= \argmax_{c^{(L)} \in \Cl^{L}} \max_{g^{(1)}\in \G^{1}} \cdots \max_{g^{(L-1)}\in \G^{L-1}} \: 
	                        		\langle \underbrace{\mu_{c^{(L)}} + \alpha_{g^{(L)*}}}_{\equiv \mu_{c^{(L-1)}}} + \cdots + \alpha_{g^{(1)}} | I 
					\rangle \nonumber\\
	                        &\cdots \nonumber\\
	                        &\equiv \argmax_{c^{(L)} \in \Cl^{L}} \: \langle \mu_{c^{(L)}g^{*}} | I  \rangle.                 \label{eqn:inf-DRM-RDF}
\end{align}
where $\mu_{c^{(\ell)}}$ has been defined in the second line. Here, we assume that the sub-trees are well-separated. In the last lines, we repeatedly use the distributivity of max over sums, resulting in the iteration
\begin{align}
	g_{c^{(\ell+1)}}^{*} &\equiv \argmax_{g^{(\ell+1)}\in \G^{\ell+1}} \langle \underbrace{\mu_{c^{(\ell+1)}g^{(\ell+1)}}}_{\equiv W^{(\ell+1)}} | I \rangle \nonumber\\
	               &\equiv {\rm ChooseChild}({\rm Filter}(I)).
	\label{eqn:iterCG-RDF}
\end{align}
Eqs.\ref{eqn:inf-DRM-RDF} and \ref{eqn:iterCG-RDF} define a {\em Decision Tree}. The leaf label histograms at the end of a decision tree plays a similar role as the SoftMax regression layer in DCNs.  Applying bagging  \cite{breiman2001random} on decision trees yield a Random Decision Forest (RDF).

\section{Unifying the Probabilistic and Neural Network Perspectives}

\begin{table}[h!]
   \caption{Summary of probabilistic and neural network perspectives for DCNs. The DRMM provides a probabilistic interpretation for all of the common elements of DCNs relating to the underlying model, inference algorithm, and learning rules.}
   \centering
   \vspace{0.3cm}
   \includegraphics[width=\linewidth]{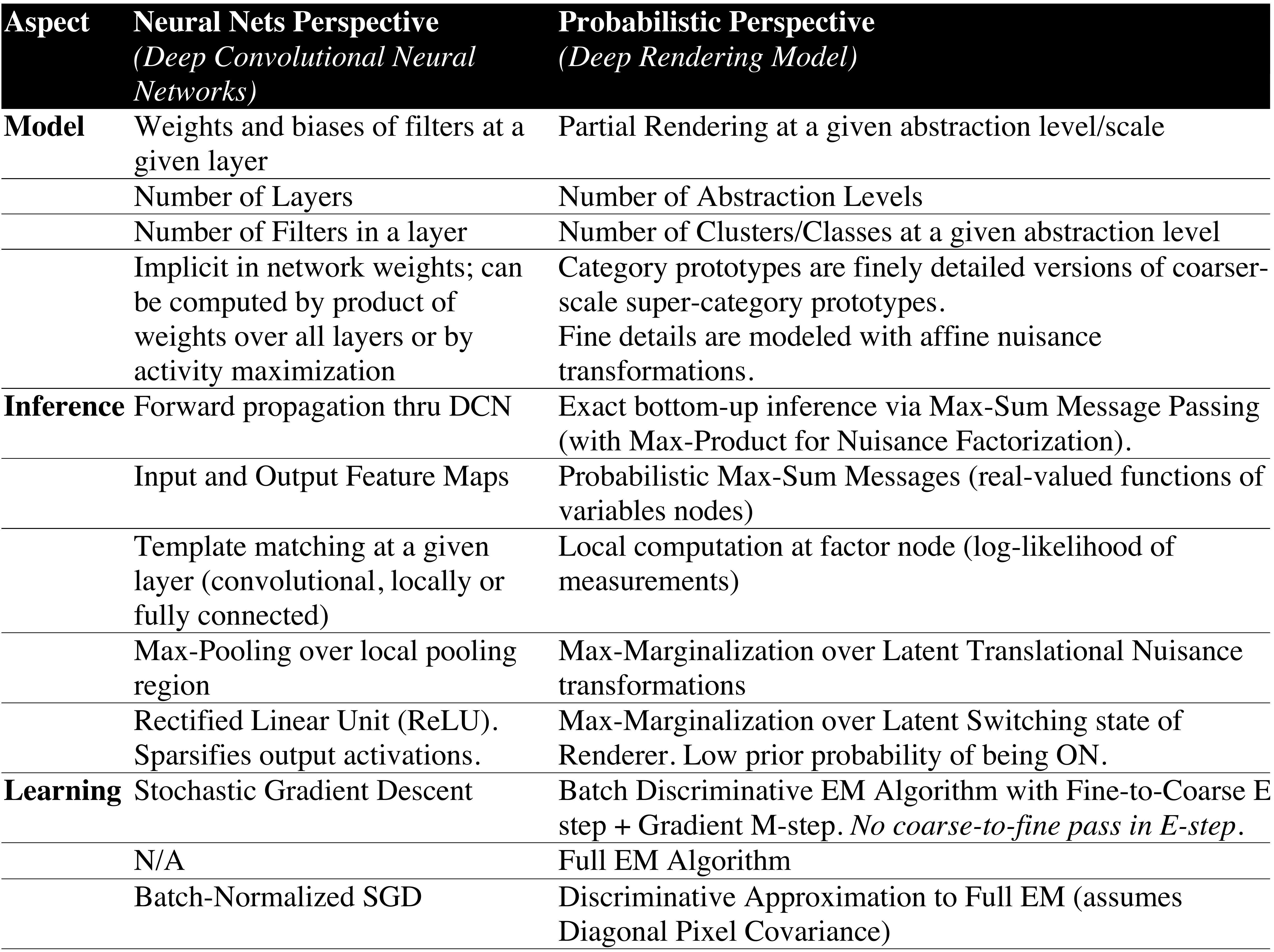} 

   \label{tab:TwoPoVs}
\end{table}

\section{Additional Experimental Results}\label{supp:more-results}

\subsection{Learned Filters and Image Reconstructions}
Filters and reconstructed images are shown in Fig.~\ref{fig:filters-reconstruction}.

\begin{figure*}[h!]
          \begin{subfigure}
            \centering
            \includegraphics[width=0.40\textwidth]{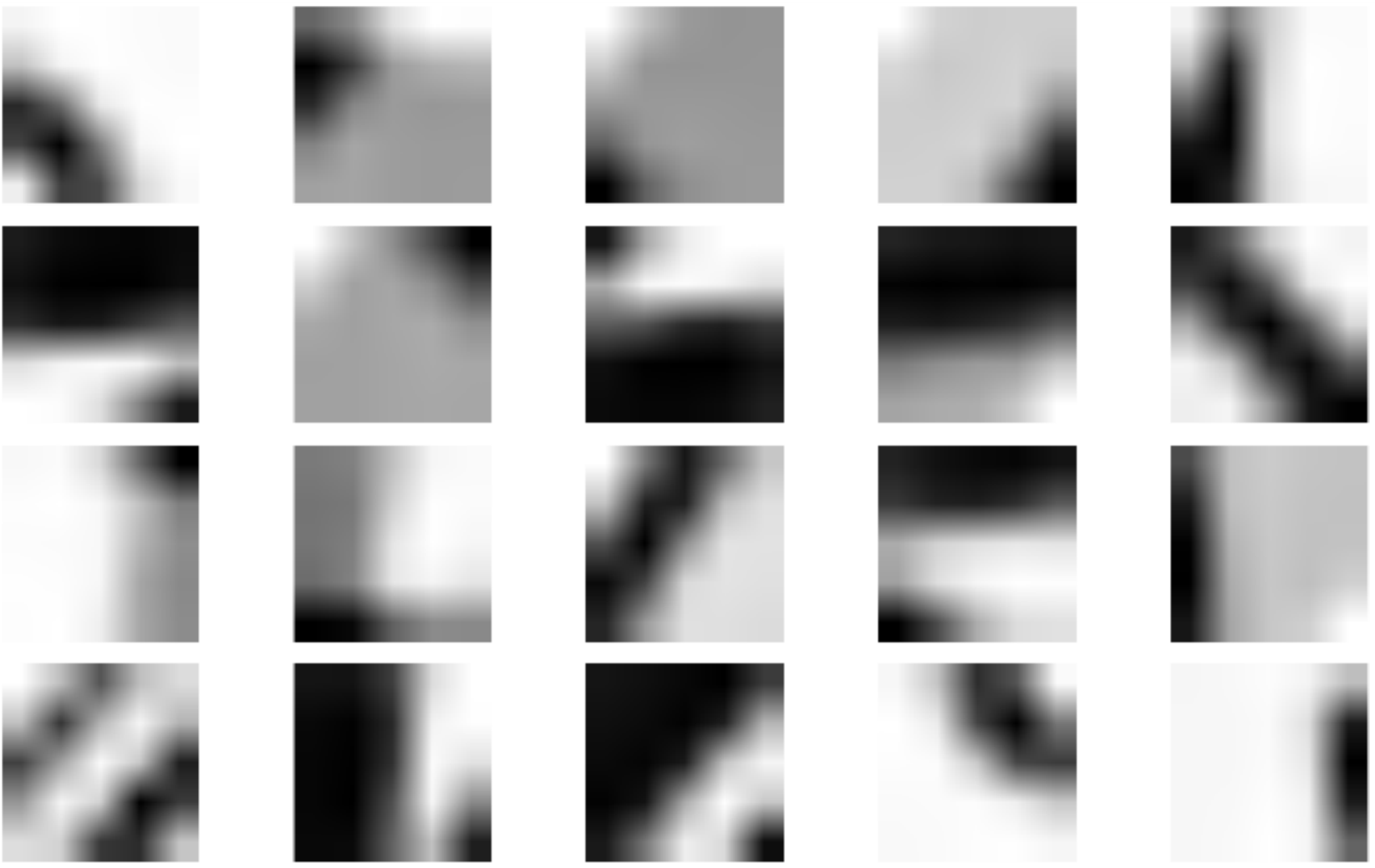}
            \label{fig:SRM-filters}
        \end{subfigure}
        \hfill
        \begin{subfigure}
            \centering
           \includegraphics[width=0.40\textwidth]{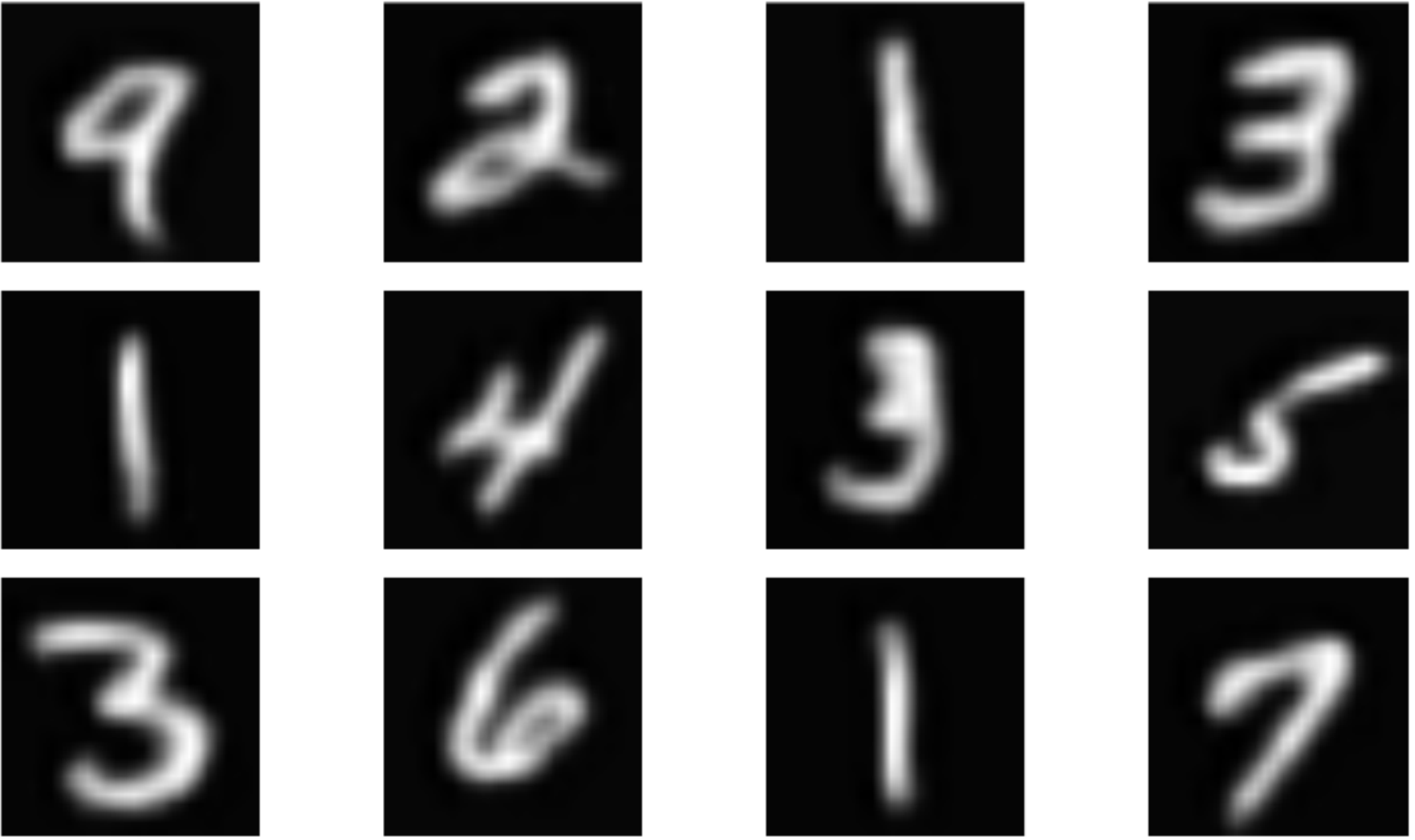} 
           \label{fig:shallow_reconstruction}
        \end{subfigure}
        \vskip\baselineskip
        \caption{(Left) Filters learned from 60,000 unlabeled MNIST samples and (Right) reconstructed images  from the Shallow Rendering Mixture Model}
        \label{fig:filters-reconstruction}
\end{figure*}

\subsection{Additional Training Results}
More results plus comparison to other related work are given in Table~\ref{tab:test-error-sup-unsup}.

\begin{table}[t]
\caption{Test error (\%) for supervised, unsupervised and semi-supervised training on MNIST using $N_{U} = 60K$ unlabeled images and $N_{L} \in \{100,600,1K,3K, 60K\}$ labeled images. }
\centering
  \tiny
\begin{tabular}{@{} lccccc @{}}
\multicolumn{1}{c}{\bf Model} &\multicolumn{4}{c}{\bf Test Error (\%)} \\
\multicolumn{1}{c}{\bf } &\multicolumn{1}{c}{\bf $N_{L}=100$} &\multicolumn{1}{c}{\bf $N_{L}=600$} &\multicolumn{1}{c}{\bf $N_{L}=1K$} &\multicolumn{1}{c}{\bf $N_{L}=3K$}  &\multicolumn{1}{c}{\bf $N_{L}=60K$}
\\ \hline \noalign{\vskip 1mm}
RFM sup & - & - & - & - & $1.21$\\
Convnet 1-layer sup & - & - & - & - & $1.30$\\
DRMM 5-layer sup & - & - & - & - & $0.89$\\
Convnet 5-layer sup & - & - & - & - & $\bf0.81$\\
\hline\hline
RFM unsup-pretr & $16.2$ & $5.65$ & $4.64$ & $2.95$ & $1.17$\\
DRMM 5-layer unsup-pretr & $\bf12.03$ & $\bf3.61$ & $\bf2.73$ & $\bf1.68$ & $\bf0.58$\\
SWWAE unsup-pretr \cite{zhao2015swwae}  & - & $9.80$ & $6.135$ & $4.41$ & - \\
\hline\hline
RFM semi-sup & $14.47$ & $5.61$ & $4.67$ & $2.96$ & $1.27$\\
DRMM 5-layer semi-sup & $3.50$ & $\bf1.56$ & $1.67$ & $\bf0.91$ & $0.51$\\
\hline
Convnet \cite{lecun1998gradient} & $22.98$ & $7.86$ & $6.45$ & $3.35$ & - \\
TSVM \cite{vapnik1998statistical} & $16.81$ & $6.16$ & $5.38$ & $3.45$ & - \\
CAE \cite{rifai2011contractive} & $13.47$ & $6.3$ & $4.77$ & $3.22$ & - \\
MTC \cite{rifai2011manifold} & $12.03$ & $5.13$ & $3.64$ & $2.57$ & - \\
PL-DAE \cite{lee2013pseudo} & $10.49$ & $5.03$ & $3.46$ & $2.69$ & - \\
WTA-AE \cite{makhzani2015winner} & - & $2.37$ & $1.92$ & - & - \\
SWWAE no dropout \cite{zhao2015swwae} & $9.17\pm0.11$ & $4.16\pm0.11$ & $3.39\pm0.01$ & $2.50\pm0.01$ & - \\
SWWAE with dropout \cite{zhao2015swwae} & $8.71\pm0.34$ & $3.31\pm0.40$ & $2.83\pm0.10$ & $2.10\pm0.22$ & -\\
M1+TSVM \cite{kingma2014semi} & $11.82\pm0.25$ & $5.72$ & $4.24$ & $3.49$ & - \\
M1+M2 \cite{kingma2014semi} & $3.33\pm0.14$ & $2.59\pm0.05$ & $2.40\pm0.02$ & $2.18\pm0.04$ & - \\
Skip Deep  Generative Model \cite{maaloe2016auxiliary} & $1.32$ & - & - & - & - \\
LadderNetwork \cite{rasmus2015semi} & $1.06\pm0.37$ & - & $\mathbf{0.84\pm0.08}$ & - & - \\
Auxiliary Deep  Generative Model \cite{maaloe2016auxiliary} & $0.96$ & - & - & - & -\\
ImprovedGAN \cite{salimans2016improved} & $\bf0.93 \pm0.065$ & - & - & - & - \\
catGAN \cite{springenberg2015unsupervised} & $1.39\pm0.28$ & - & - & - & - \\
\hline
\end{tabular}
\label{tab:test-error-sup-unsup}
\end{table}

\begin{table}[t!]
\caption{Test error rates (\%) between 2-layer DRMM and 9-layer DRMM trained with semi-supervised EG and other best published results on CIFAR10 using $N_{U} = 50K$ unlabeled images and $N_{L} \in \{4K, 50K \}$ labeled images}
\centering
 \small
\begin{tabular}{@{} lcc @{}}
\multicolumn{1}{c}{\bf Model} &\multicolumn{1}{c}{\bf $N_{L}=4K$} &\multicolumn{1}{c}{\bf $N_{L}=50K$} \\ \hline \noalign{\vskip 1mm}
Convnet \cite{lecun1998gradient} & $43.90$ & $27.17$  \\
Conv-Large \cite{springenberg2014striving} & - & $9.27$  \\
\hline
CatGAN \cite{springenberg2015unsupervised} & $19.58\pm0.46$ & 9.38 \\
ImprovedGAN \cite{salimans2016improved} & $\mathbf{18.63\pm2.32}$ & - \\
\hline
LadderNetwork \cite{rasmus2015semi} & $20.40\pm0.47$ & - \\
\hline \hline
DRMM 2-layer & $39.2$ & $24.60$ \\
DRMM 9-layer & $23.24$ & $11.37$ \\
\hline
\end{tabular}
\label{tab:test-error-semi-sup-cifar10}
\end{table}

\begin{table}[h!]
\caption{Comparison of RFM, 2-layer DRMM and 5-layer DRMM  against Stacked What-Where Auto-encoders with various regularization approaches on the MNIST dataset. $N$ is the number of labeled images used, and there is no extra unlabeled image.}
\centering
  \small
\begin{tabular}{@{} lcccc @{}}
\multicolumn{1}{c}{\bf Model} &\multicolumn{1}{c}{\bf $N=100$} &\multicolumn{1}{c}{\bf $N=600$} &\multicolumn{1}{c}{\bf $N=1K$} &\multicolumn{1}{c}{\bf $N=3K$}
\\ \hline \noalign{\vskip 1mm}
SWWAE (3 layers) \cite{zhao2015swwae} & $\mathbf{10.66\pm0.55}$ & $4.35\pm0.30$ & $3.17\pm0.17$ & $2.13\pm0.10$ \\
SWWAE (3 layers) + dropout on convolution \cite{zhao2015swwae}  & $14.23\pm0.94$ & $4.70\pm0.38$ & $3.37\pm0.11$ & $2.08\pm0.10$ \\
SWWAE (3 layers) + L1 \cite{zhao2015swwae} & $10.91\pm0.29$ & $4.61\pm0.28$ & $3.55\pm0.31$ & $2.67\pm0.25$ \\
SWWAE (3 layers) + noL2M \cite{zhao2015swwae}  & $12.41\pm1.95$ & $4.63\pm0.24$ & $3.15\pm0.22$ & $2.08\pm0.18$\\
\hline
Convnet (1 layer) & $18.33$ & $10.36$ & $8.07$ & $4.47$ \\
\hline \hline
RFM (1 layer) & $22.68$ & $6.51$ & $4.66$ & $3.55$ \\
DRMM 2-layer & $12.56$ & $6.50$ & $4.75$ & $2.66$ \\
DRMM 5-layer & $11.97$ & $\mathbf{3.70}$ & $\mathbf{2.72}$ & $\mathbf{1.60}$ \\
\hline
\end{tabular}
\label{tab:test-error-reg}
\end{table}

\section{Model Configurations}\label{supp:config}
In our experiments, configurations of the RFM and 2-layer DRFM are similar to LeNet5 \cite{lecun1998gradient} and its variants. Also, configurations of the 5-layer DRMM (for MNIST) and the 9-layer DRMM (for CIFAR10) are similar to Conv-Small and Conv-Large architectures in \cite{springenberg2014striving, rasmus2015semi}, respectively. 

%



\end{document}